  \providecommand\BibTeX{{%
    \normalfont B\kern-0.5em{\scshape i\kern-0.25em b}\kern-0.8em\TeX}}} 
\newtheorem{theorem}{Theorem}
\newtheorem{definition}{Definition}
\newtheorem{lemma}{Lemma}
\newtheorem{claim}{Claim}
\newtheorem{remark}{Remark}
\newtheorem{proposition}{Proposition}
\begin{document}
\fancyhead{}
\title{Towards Practical Lipschitz Bandits}


\author{Tianyu Wang} 
\email{tianyu@cs.duke.edu}
\affiliation{%
  \institution{Duke University}
  \city{Durham}
  \country{USA}
}

\author{Weicheng Ye}
\email{weicheny@andrew.cmu.edu} 
\affiliation{%
  \institution{Carnegie Mellon University} 
  \city{Pittsburgh} 
  \country{USA} 
}

\author{Dawei Geng}
\authornote{Most work was done while at Duke University.}
\email{dawei.geng@duke.edu}
\affiliation{\institution{Autodesk, Inc.} 
\city{San Francisco}
\country{USA}
}

\author{Cynthia Rudin}
\email{cynthia@cs.duke.edu} 
\affiliation{%
  \institution{Duke University} 
  \city{Durham} 
  \country{USA} 
}






\begin{abstract}
Stochastic Lipschitz bandit algorithms balance exploration and exploitation, and have been used for a variety of important task domains. 
In this paper, we present a framework for Lipschitz bandit methods that adaptively learns partitions of context- and arm-space. Due to this flexibility, the algorithm is able to efficiently optimize rewards and minimize regret, by focusing on the portions of the space that are most relevant. 
In our analysis, we link tree-based methods to Gaussian processes. In light of our analysis, we design a novel hierarchical Bayesian model for Lipschitz bandit problems. 
Our experiments show that our algorithms can achieve state-of-the-art performance in challenging real-world tasks such as neural network hyperparameter tuning.     
\end{abstract}



\begin{CCSXML}
<ccs2012>
<concept>
<concept_id>10003752.10003809.10010047.10010048</concept_id>
<concept_desc>Theory of computation~Online learning algorithms</concept_desc>
<concept_significance>500</concept_significance>
</concept>
</ccs2012>
\end{CCSXML}

\ccsdesc[500]{Theory of computation~Online learning algorithms}

\keywords{Bandit Algorithms, Lipschitzness, Gaussian processes, Hyperparameter Tuning}



\maketitle


\section{Introduction} \label{introduction}
Stochastic Lipschitz bandit algorithms are methods that balance exploration-exploitation tradeoffs. 
Their usage arises in important real-world scenarios. For example, in medical trials, a doctor might deliver a sequence of treatment options with the goal of achieving the best total treatment effect, or with the goal of allocating the best treatment option as efficiently as possible, without conducting too many trials.

A stochastic bandit problem assumes that payoffs are noisy and are drawn from an unchanging distribution. The study of stochastic bandit problems started with the discrete arm setting, where the agent is faced with a finite set of choices. 
Classic works on this problem include Thompson sampling  \citep{thompson1933likelihood,agrawal2012analysis}, Gittins index \citep{gittins1979bandit}, $\epsilon$-greedy strategies \citep{sutton1998introduction}, and upper confidence bound (UCB) methods \citep{lai1985asymptotically,auer2002finite}. 
One recent line of work on stochastic bandit problems considers the case where the arm space is infinite. In this setting, the arms are usually assumed to be in a subset of the Euclidean space (or a more general metric space), and the expected payoff function is assumed to be a function of the arms. 
Some works along this line model the expected payoff as a linear function of the arms \citep{auer2002using,dani2008stochastic,li2010contextual,abbasi2011improved,agrawal2013thompson}; some algorithms model the expected payoff as Gaussian processes over the arms \citep{srinivas2009gaussian,contal2014gaussian,de2012exponential}; some algorithms assume that the expected payoff is a Lipschitz function of the arms \citep{slivkins2014contextual, kleinberg2008multi,bubeck2011x, magureanu2014lipschitz}; and some assume locally H\"older payoffs on the real line \citep{auer2007improved}. When the arms are continuous and equipped with a metric, and the expected payoff is Lipschitz continuous in the arm space, we refer to the problem as a stochastic Lipschitz bandit problem. In addition, when the agent's decisions are made with the aid of contextual information, we refer to the problem as a contextual stochastic Lipschitz bandit problem. 
Not many works \cite{bubeck2011x,kleinberg2008multi,magureanu2014lipschitz} have considered the general Lipschitz bandit problem without making strong assumptions on the smoothness of rewards in context-arm space.
In this paper, we focus our study on this general (contextual) stochastic Lipschitz bandit problem, and provide practical algorithms for use in data science applications.
Specifically, we propose a framework that converts a general decision tree algorithm into an algorithm for stochastic Lipschitz bandit problems. We use a novel analysis that links our algorithms to Gaussian processes; though the underlying rewards do not need to be generated by any Gaussian process. Based on this connection, we can use a novel hierarchical Bayesian model to design a new (UCB) index. 
This new index solves two main problems suffered by partition based bandit algorithms. Namely, (1) within each bin of the partition, all arms  are treated the same; (2) disjoint bins do not use information from each other. 

Empirically, we show that using adaptively learned partitions, Lipschitz bandit algorithms can be used for hard real-world problems such as hyperparameter tuning for neural networks. 

\noindent 
\textbf{Relation to prior work: } One general way of solving stochastic Lipschitz bandit problems is to finely discretize (partition) the arm space and treat the problem as a finite-arm problem. An Upper Confidence Bound (UCB) strategy can thus be used. Previous algorithms of this kind include the \texttt{UniformMesh} algorithm \citep{kleinberg2008multi}, the HOO algorithm \citep{bubeck2011x}, and the (contextual) Zooming Bandit algorithm \citep{kleinberg2008multi, slivkins2014contextual}. While all these algorithms employ different analysis techniques, we show that as long as a discretization of the arm space fulfills certain requirements (outlined in Theorem \ref{thm:regret-bound}), these algorithms (or a possibly modified version) can be analyzed in a unified framework. 


The practical problem with previous methods is that they require either a fine discretization of the full arm space or restrictive control of the partition formation (e.g., Zooming rule \citep{kleinberg2008multi}), leading to implementations that are not flexible. By fitting decision trees that are grown adaptively during the run of the algorithm, our partition can be \textit{learned} from data. 
This advantage enables the algorithm to outperform leading methods for Lipschitz bandits (e.g. \cite{bubeck2011x, kleinberg2008multi}) and for zeroth order optimization (e.g. \citep{ martinez2014bayesopt, li2016hyperband}) on hard real-world problems that can involve difficult arm space and reward landscape. As shown in the experiments, in neural network hyperparameter tuning, our methods can outperform the state-of-the-art benchmark packages that are tailored for hyperparameter selection. 

In summary, our contributions are:  
\textbf {1)} 
We develop a novel stochastic Lipschitz bandit framework, \textit{TreeUCB} and its contextual counterpart \textit{Contextual TreeUCB}. Our framework converts a general decision tree algorithm into a stochastic Lipschitz bandit algorithm. Algorithms arising from this framework empirically outperform benchmarks methods. 
\textbf{2)} We develop a new analysis framework, which can be used to recover previous known bounds, and design a new principled acquisition function in bandits and zero-th order optimization. 
\section{Main results}
\label{sec:bandit}
\subsection{The TreeUCB framework}
Stochastic bandit algorithms, in an online fashion, explore the decision space while exploit seemingly good options. 
The performance of the algorithm is typically measured by regret. In this paper, we focus our study on the following setting. A payoff function is defined over an arm space that is a compact doubling metric space $(\mathcal{A},d )$, the payoff function of interest is $f: \mathcal{A} \rightarrow [0,1]$, and the actual observations are given by $y(a) = f(a) + \epsilon_a$. In our setting, the noise distribution $\epsilon_a$ could vary with $a$, as long as it is uniformly mean zero, almost surely bounded, and independent of $f$ for every $a$. Our results easily generalize to sub-Gaussian noise \citep{shamir2011variant}.  
In the analysis, we assume that the (expected) payoff function $f$ is Lipschitz in the sense that $\forall a, a^\prime \in \mathcal{A}$, $ \left| f (a) - f(a^\prime) \right| \le L d ( a, a^\prime ) $ for some Lipschitz constant $L$. An agent is interacting with this environment in the following fashion. At each round $t$, based on past observations $(a_1, y_1, \cdots, a_{t-1}, y_{t-1} )$, the agent makes a query at point $a_t$ and observes the (noisy) payoff $y_t$, where $y_t$ is revealed only after the agent has made a decision $a_t$. 
For an agent executing algorithm \texttt{Alg}, the regret incurred up to time $T$ is defined to be: 
\begin{align*}
R_T (\texttt{Alg}) = \sum_{t=1}^T \left(  f(a^*) - f(a_t) \right),
\end{align*}
where $a^*$ is the global maximizer of $f$. 





Any TreeUCB algorithm runs by maintaining a sequence of finite partitions of the arm space. Intuitively, at each step $t$, TreeUCB treats the problem as a finite-arm bandit problem with respect to the partition bins at $t$, and chooses an arm uniformly at random within the chosen bin. The partition bins become smaller and smaller as the algorithm runs. Thus, at any time $t$, we maintain a partition $\mathcal{P}_t = \left\{ P_t^{(1)}, \cdots, P_t^{(k_t)} \right\}$ of the input space. That is, $ P_t^{(1)}, \cdots, P_t^{(k_t)} $ are subsets of $\mathcal{A}$, are mutually disjoint and $\cup_{i=1}^{k_t} P_t^{(i)} = \mathcal{A}$. 

As an example, Figure \ref{fig:heatmap} shows an partitioning of the input space, with the underlying reward function shown by color gradient. In an algorithm run, we collect data and estimate the reward with respect to the partition. Based on the estimate, we select a ``box'' to play next. 


\begin{figure}
    \centering
    \includegraphics[width=0.5\textwidth]{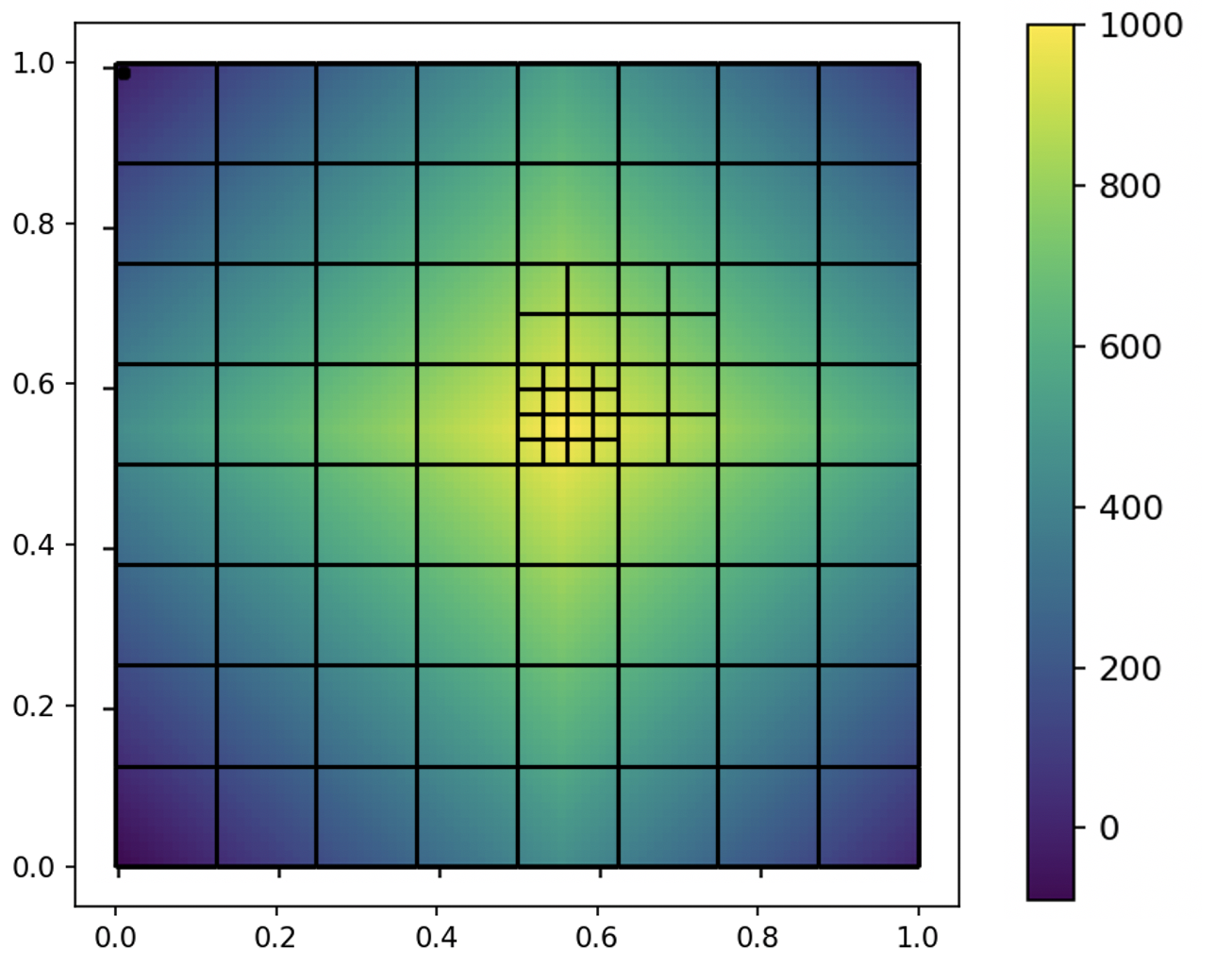}
    \caption{Example reward function (in color gradient) with an example partitioning.
    \label{fig:heatmap}}
\end{figure}
Each element in the partition is called a \textit{region} and by convention $\mathcal{P}_0 = \{ \mathcal{A} \}$. The regions could be leaves in a tree, or chosen in some other way.

Given any $t$, if for any $P^{(i)} \in \mathcal{P}_{t+1}$, there exists $P^{(j)} \in \mathcal{P}_t$ such that $P^{(i)} \subset P^{(j)} $, we say that $\{ \mathcal{P}_t\}_{t \ge 0}$ is a sequence of \textbf{\textit{nested partitions.}} In words, at round $t$, some regions (or no regions) of the partition are split into multiple regions to form the partition at round $t+1$. We also say that the partition \textbf{\textit{grows finer.}} 

Based on the partition $\mathcal{P}_t$ at time $t$, we define an auxiliary function -- the \textit{Region Selection function}.
\begin{definition}[Region Selection Function]
\label{def:region-selection}
Given partition $\mathcal{P}_t$, function $p_t : \mathcal{A} \rightarrow \mathcal{P}_t$ is called a Region Selection Function with respect to $\mathcal{P}_t$ if for any $a \in \mathcal{A}$, $p_t(a)$ is the region in $\mathcal{P}_t$ containing $a$. 
\end{definition}
\vspace{-5pt}

As the name TreeUCB suggests, our framework follows an Upper Confidence Bound (UCB) strategy. In order to define our Upper Confidence Bound, we require several definitions.
\begin{definition} \label{def:count-mean}
Let $\mathcal{P}_{t}$ be the partition of $\mathcal{A}$ at time $t$ ($t \ge 1$) and let $p_t$ be the Region Selection Function associated with $\mathcal{P}_t$. Let $(a_1, y_1, a_2, y_2 , \cdots, a_{t^\prime }, y_{t^\prime } )$ be the observations received up to time $t^\prime$ ($t^\prime \ge 1$). We define \\
    $\bullet$ the \textit{count} function $n_{t, t^\prime}^0: \mathcal{A} \rightarrow \mathbb{R}$, such that 
    $$ n_{t, t^\prime}^0 (x) = \sum_{i=1}^{t^\prime } \mathbb{I}[x_i \in p_{t}(x)]. $$ 
    $\bullet$ the \textit{corrected average} function $m_{t, t^\prime}: \mathcal{A} \rightarrow \mathbb{R}$, such that
    \begin{align}
    m_{t, t^\prime}(a) = 
    \begin{cases}
    \frac{\sum_{i = 1 }^{ t^\prime } y_i \mathbb{I}[a_i \in p_{t}(a)] }{ n^0_{t, t^\prime} (a) } , 
    \text{ if }  n^0_{t, t^\prime}(a) > 0; 
    \\
    1, \quad \text{otherwise}.
    \end{cases} \label{eq:corrected-average}
    \end{align} 
    $\bullet$ the \textit{corrected count} function, such that
    \begin{align}
        n_{t, t^\prime} (x) &= \max \left( 1, n_{t, t^\prime}^0 (x) \right). \label{eq:corrected-count} 
    \end{align}
    When $t = t^\prime$, we shorten the notation from $m_{t,t^\prime}$ to $m_t$, $n_{t,t^\prime}^0$ to $n_t^0$, and $n_{t,t^\prime}$ to $n_t$. 

\end{definition}

In words, $n^0_{t,t^\prime} (a)$ is the number of points among $(a_1, a_2, \cdots, a_{t^\prime})$ that are in the same region as arm $a$, with regions as elements in $\mathcal{P}_t$. 
We also denote by $D(\mathcal{S})$ the diameter of $\mathcal{S} \subset \mathcal{A}$, and $D(\mathcal{S}) := \sup_{a^\prime, a^{\prime \prime } \in \mathcal{S}} d ( a^\prime, a^{\prime \prime } ) $. 

At time $t$, based on the partition and observations, our bandit algorithm uses, for $a \in \mathcal{A}$ 
\begin{align}
U_t(a) = m_{t-1} (a) + C \sqrt{ \frac{4 \log t}{n_{t-1} (a) } } + M\cdot D (p_t (a)), \label{eq:partition-ucb}
\end{align} 
for some $C$ and $M$ as the Upper Confidence Bound of arm $a$; and we play an arm with the highest $U_t$ value (with ties broken uniformly at random). 

\begin{remark}
    As we will discuss in Section \ref{sec:hierarchical-bayesian}, the upper confidence index for our decision can take different forms other than (\ref{eq:partition-ucb}). 
\end{remark}

Here $C$ depends on the almost sure bound on the reward, and $M$ depends on the Lipschitz constant of the expected reward, which are both problem intrinsics.

Since $U_t$ is a piece-wise constant function in the arm-space and is constant within each region, playing an arm with the highest $U_t$ with random tie-breaking is equivalent to selecting the best region (under UCB) and randomly selecting an arm within the region. 
After deciding which arm to play,
we update the partition into a finer one if eligible.  This strategy, TreeUCB, is summarized in Algorithm \ref{alg:tucb}. We also provide a provable guarantee for TreeUCB algorithms in Theorem \ref{thm:regret-bound}.

\begin{algorithm}
    \caption{TreeUCB (TUCB)}
    \label{alg:tucb}
    \begin{algorithmic}[1] 
        \State Parameter:  $M \ge 0$ ($M \ge L$). $C > 0$. Tree fitting rule $\mathcal{R}$ that satisfies 1--4 in Theorem \ref{thm:regret-bound}. 
        
        \Statex \Comment{$C$ depends on the a.s. bound of the reward. }
        \Statex \Comment{$M$ depends on the Lipschitz constant of the expected reward. }
        \For{$t = 1, 2, \dots, T$}
        		
           	\State Fit the tree $f_{t-1}$ using rule $\mathcal{R}$ on observations $( a_1, y_1, a_2, y_2, \dots, a_{t-1}, y_{t-1})$. 
		\State With respect to the partition $\mathcal{P}_{t-1}$ defined by leaves of $f_{t-1}$, define $m_{t-1}$, $n_{t-1}$ as in (\ref{eq:corrected-average}) and (\ref{eq:corrected-count}). Play
			\begin{align}
				a_t \in \arg \max_{a \in  \mathcal{A} } \left\{ U_t ( a ) \right\}, \label{eq:ucb}
			\end{align} 
			where 
			$U_t$ is defined in (\ref{eq:partition-ucb}). 
			Ties are broken uniformly at random. 
		\State Observe the reward $y_t$. 
           \EndFor
    \end{algorithmic}
\end{algorithm} 


\begin{theorem} 
\label{thm:regret-bound} 
Suppose that the payoff function $f$ defined on a compact domain $ \mathcal{A}$ satisfies $f(a) \in [0, 1]$ for all $a$ and is  Lipschitz. Let $\mathcal{P}_t$ be the partition at time $t$ in Algorithm \ref{alg:tucb}. If the tree fitting rule $\mathcal{R}$ satisfies  \\
    (1) $\{ \mathcal{P}_t \}_{t\ge 0}$ is a sequence of nested partitions (or the partition grows finer); \\
    (2) $|\mathcal{P}_t| = \ o (t^\gamma)$ for some $\gamma < 1$; \\
    (3) 
    $D(p_t (a)) = o (1)$ for all $a \in \mathcal{A}$,  
    where 
    $$D(p_t (a)) := \sup_{a^\prime, a^{\prime \prime } \in p_t (a)} d ( a^\prime, a^{\prime \prime } )$$ is the diameter of region $p_t (a)$; \\
    (4) given all realized observations $\{(a_t, y_t)\}_{t=1}^T$, the partitions $\{ \mathcal{P}_t \}_{t=1}^T $ are deterministic;
then the regret for Algorithm \ref{alg:tucb} satisfies 
$$\lim_{T \rightarrow \infty }\frac{ R_T (TUCB)  }{T} = 0$$ with probability 1.
\end{theorem}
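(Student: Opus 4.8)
The plan is to run the standard optimism-based regret decomposition, isolating all probabilistic content into a single ``good event'' on which the index $U_t$ is a valid upper confidence bound, and then to bound the remaining regret pathwise using the combinatorial hypotheses (1)--(4). First I would fix the global maximizer $a^*$ and write $b_t(a) := C\sqrt{4\log t / n_{t-1}(a)} + M\, D(p_t(a))$ for the confidence bonus. The key deterministic observation is that $a_t$ maximizes $U_t$, so $U_t(a_t) \ge U_t(a^*)$; hence if on a suitable event we have both optimism, $U_t(a^*) \ge f(a^*)$, and accuracy at the played arm, $m_{t-1}(a_t) - f(a_t) \le b_t(a_t)$, then
\[
f(a^*) - f(a_t) \le U_t(a_t) - f(a_t) = \big(m_{t-1}(a_t) - f(a_t)\big) + b_t(a_t) \le 2\, b_t(a_t).
\]
The analysis then reduces to (i) showing such an event holds for all but finitely many $t$ with probability $1$, and (ii) showing $\sum_{t=1}^T b_t(a_t) = o(T)$ pathwise.

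For (i), I would decompose $m_{t-1}(a)-f(a)$ over a cell $P\ni a$ into a bias term $\frac1n\sum_i (f(a_i)-f(a))\mathbb{I}[a_i\in P]$ and a noise term $\frac1n\sum_i \epsilon_i\mathbb{I}[a_i\in P]$. Lipschitzness bounds the bias by $L\,D(P)\le M\,D(P)$ (using $M\ge L$), which is absorbed by the diameter part of the bonus; the noise term is a bounded mean-zero average, so a Hoeffding/Azuma bound gives deviation exceeding $C\sqrt{4\log t/n}$ with probability $\le 2\,t^{-2C^2/b^2}$, where $b$ is the a.s.\ noise bound. Union bounding over the at most $|\mathcal{P}_{t-1}|=o(t^\gamma)$ cells and the $\le t$ possible counts, the round-$t$ failure probability is $o(t^{\,1+\gamma-2C^2/b^2})$, which is summable once $C$ is large enough relative to $b$ (so that $2C^2/b^2>2+\gamma$, using $\gamma<1$); Borel--Cantelli then delivers the event for all large $t$ almost surely. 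The delicate point here---the step I expect to be the main obstacle---is that both the cells and the memberships $\mathbb{I}[a_i\in P]$ depend on the data, so the in-cell noise terms are not a priori independent of the partition. This is exactly where hypothesis (4) is needed: since the partition is $\mathcal{F}_{t-1}$-measurable given the observations, one can condition on it and treat the cumulative in-cell noise as a martingale to which a maximal inequality applies.

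For (ii), the two pieces of $\sum_t b_t(a_t)$ are handled deterministically. For the confidence piece, nestedness (1) gives $p_T(a)\subseteq p_{t-1}(a)$, so $n_{t-1}(a_t)$ dominates the count of $a_t$'s cell in the finest partition $\mathcal{P}_T$. Grouping rounds by their cell $Q\in\mathcal{P}_T$, the $j$-th visit to $Q$ contributes at most $1/\sqrt{\max(1,j-1)}$, so $\sum_{j\le N_Q}1/\sqrt{\max(1,j-1)}\le 1+2\sqrt{N_Q}$; Cauchy--Schwarz with $\sum_Q N_Q=T$ then gives $\sum_t 1/\sqrt{n_{t-1}(a_t)}\le |\mathcal{P}_T|+2\sqrt{|\mathcal{P}_T|\,T}$. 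Multiplying by $2\sqrt{\log T}$ and invoking (2) makes this $O(\sqrt{T^{1+\gamma}\log T})=o(T)$ since $\gamma<1$. For the diameter piece, I would deduce from (3) and compactness that $\sup_a D(p_t(a))\to 0$: the sets $\{a: D(p_t(a))\ge\delta\}$ are closed, decreasing in $t$ by nestedness, and have empty intersection by (3), so the finite-intersection property forces them empty for large $t$; a Cesàro average then gives $\frac1T\sum_t D(p_t(a_t))\le\frac1T\sum_t\sup_a D(p_t(a))\to 0$.

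Combining (i) and (ii), on the almost-sure good event we obtain $R_T=\sum_t\big(f(a^*)-f(a_t)\big)\le 2\sum_t b_t(a_t)=o(T)$, which yields $\lim_{T\to\infty}R_T/T=0$ with probability $1$. I expect the only genuinely technical work to be the adaptive concentration in step (i); the two summation bounds in step (ii) are elementary once the nesting and counting conditions are exploited as above.
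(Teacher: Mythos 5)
Your proposal is correct and shares the paper's high-level skeleton (optimism at $a^*$, per-round concentration, Borel--Cantelli, then a deterministic bound on the summed bonuses), but it takes a genuinely different route on the two technical components. First, for the concentration you union-bound over all $o(t^\gamma)$ cells and over the possible counts, which forces $C$ to be large enough that $2C^2/b^2 > 2+\gamma$; the paper instead applies the Azuma bound only at the two points $a_t$ and $a^*$ via a skipped-martingale argument (its Claims \ref{claim:concentrate}--\ref{claim:single-step-regret}, proved in the contextual setting as Claims \ref{claim:concentration-contextual}--\ref{claim:single-step-regret-contextual}), explicitly avoiding any union bound over regions, so its constant does not depend on $\gamma$. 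Both routes lean on hypothesis (4) in exactly the way you identify. Second, and more substantively, you bound $\sum_t n_{t-1}(a_t)^{-1/2}$ by grouping rounds according to their cell in the terminal partition $\mathcal{P}_T$ and summing $1/\sqrt{\max(1,j-1)}$ over visit indices, which is essentially the paper's elementary argument for its secondary inequalities (\ref{eq:point-scattering-1})--(\ref{eq:point-scattering-alpha}); the paper's actual proof of the theorem instead routes through the ``point scattering'' inequality (\ref{eq:point-scattering-gp}), proved by embedding the partition into a degenerate Gaussian process and invoking information-theoretic determinant bounds, followed by Cauchy--Schwarz on $\sum_t n_{t-1}(a_t)^{-1}$. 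Your counting argument is more elementary and fully sufficient for the $o(T)$ conclusion; what the GP detour buys the paper is the link between the Hoeffding radius and a posterior variance, which motivates the hierarchical Bayesian index of Section \ref{sec:hierarchical-bayesian} rather than being needed for Theorem \ref{thm:regret-bound} itself.

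One small caution on your diameter step: the sets $\{a : D(p_t(a)) \ge \delta\}$ are unions of cells and need not be closed, so the finite-intersection argument does not apply verbatim; for arbitrary (e.g.\ disconnected) cells, pointwise $D(p_t(a)) = o(1)$ plus nestedness does not imply $\sup_a D(p_t(a)) \to 0$. The claim is salvageable for tree leaves (axis-aligned boxes): if some cell of diameter $\ge \delta$ survived at every $t$, K\"onig's lemma would produce a nested chain of boxes of diameter $\ge \delta$ whose intersection contains a point violating (3). You are in good company here --- the paper's own handling of this term is a one-line assertion with a fixed $a$ rather than the played sequence $a_t$ --- but the justification you give should be repaired along these lines.
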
 

The above assumptions are all mild and reasonable. For item 1, we can use incremental tree learning \citep{utgoff1989incremental} to enforce nested partitions. For item 2, we may put a cap (that may depend on $t$) on the depth of the tree to constrain it. For item 3, we may put a cap (that may depend on $t$) on tree leaf diameters to ensure it. For item 4, any non-random tree learning rule meets this criteria, since in this case, the randomness only comes from the data (and/or number of data points observed). 

We now discuss the proof of Theorem \ref{thm:regret-bound}. Throughout the rest of the paper, we use $\widetilde{\mathcal{O}}$ to omit constants and poly-log terms unless otherwise noted. 
To prove Theorem \ref{thm:regret-bound}, we first use Claims \ref{claim:concentrate} and \ref{claim:single-step-regret} to bound the single step regret, we then use Lemma \ref{lem:point-scattering} and Assumptions (1) -- (3) to bound the total regret. 

To start with, we first present the following two claims, which may also be carefully extracted from previous works \citep[e.g.][]{bubeck2011x}. 
\begin{claim}
\label{claim:concentrate}
For an arbitrary arm $a$, and time $t$, with probability at least $1 - \frac{1}{t^4}$, we have,
\begin{align*}
    \left| m_{t-1} (a) - f(a) \right| \le L \cdot  D(p_{t-1} (a)) + C \sqrt{ \frac{ 4 \log t}{ n_{t-1} (a)} }
\end{align*} 
for a constant $C$ that depends only on the a.s. bound of the reward. 
\end{claim}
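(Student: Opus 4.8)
The plan is to separate the two sources of error in $m_{t-1}(a)$: a deterministic \emph{approximation} error from averaging $f$ over the region $p_{t-1}(a)$, and a stochastic error from the averaged noise. First I would dispose of the degenerate case $n^0_{t-1}(a) = 0$. Here $m_{t-1}(a) = 1$ and $n_{t-1}(a) = 1$ by definition, so $|m_{t-1}(a) - f(a)| \le 1$ since $f \in [0,1]$; for $t \ge 2$ this is dominated by $C\sqrt{4\log t}$ once $C$ is at least a fixed constant, so the bound holds deterministically (with probability $1$). For $t = 1$ the statement is vacuous since $1 - 1/t^4 = 0$. The substantive case is $n^0_{t-1}(a) > 0$.

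In that case I would write $y_i = f(a_i) + \epsilon_{a_i}$ and decompose, over the index set $I = \{ i \le t-1 : a_i \in p_{t-1}(a)\}$,
\begin{align*}
m_{t-1}(a) - f(a) = \frac{1}{n^0_{t-1}(a)} \sum_{i \in I} \bigl( f(a_i) - f(a) \bigr) + \frac{1}{n^0_{t-1}(a)} \sum_{i \in I} \epsilon_{a_i}.
\end{align*}
For the first (bias) sum, every $a_i$ with $i \in I$ lies in the same region as $a$, hence $d(a_i, a) \le D(p_{t-1}(a))$; Lipschitzness of $f$ gives $|f(a_i) - f(a)| \le L\, D(p_{t-1}(a))$ termwise, and averaging yields the deterministic bound $L\, D(p_{t-1}(a))$, which is exactly the first term on the right-hand side. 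It then remains to control the averaged-noise term by $C\sqrt{4 \log t / n_{t-1}(a)}$ with the stated probability.

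For the noise term I would invoke a concentration inequality for bounded, mean-zero summands (Hoeffding/Azuma), with the almost-sure bound $\sigma$ on $\epsilon_a$ setting the sub-Gaussian scale and hence fixing the constant $C$. Since the realized count $n^0_{t-1}(a)$ is itself random, I would run the argument over the possible values $n \in \{1, \dots, t-1\}$ of the count: for each fixed $n$ the normalized deviation exceeds $C\sqrt{4\log t / n}$ with probability at most $2\, t^{-2C^2/\sigma^2}$, so choosing $C$ large enough relative to $\sigma$ makes each such term at most $\tfrac{1}{t^5}$; a union bound over the at most $t$ candidate counts then yields the claimed failure probability $\tfrac{1}{t^4}$. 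This is also why the index carries $4\log t$ rather than $\log t$.

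The delicate point, and the main obstacle, is that the index set $I$ is \emph{data dependent}: the region $p_{t-1}(a)$ is produced by the tree-fitting rule from all of $(a_1,y_1,\dots,a_{t-1},y_{t-1})$, so the membership indicator $\mathbb{I}[a_i \in p_{t-1}(a)]$ need not be measurable with respect to the history available when $\epsilon_{a_i}$ is drawn, and the summands are therefore not a clean predictable martingale-difference sequence. This is exactly where Assumption (4) enters: because the partitions are a deterministic function of the observed data, conditioning on the data fixes $p_{t-1}(a)$ without injecting extra randomness, so I can reduce to a concentration statement over the deterministically determined regions of $\mathcal{P}_{t-1}$ together with the union bound over counts above, rather than over an uncountable family of partitions. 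Carrying out this conditioning rigorously while retaining the $\log t$ (instead of $t$) scaling in the deviation is the crux; I would handle it by arguing uniformly over the regions and count values, in the manner of the fixed-tree analyses of \citep{bubeck2011x}, from which the statement can be adapted.
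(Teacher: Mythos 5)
Your decomposition is exactly the paper's: the paper (in the proof of its contextual generalization, Claim \ref{claim:concentration-contextual}) also splits $m_{t-1}(a)-f(a)$ into the in-region Lipschitz bias, bounded deterministically by $L\cdot D(p_{t-1}(a))$, plus an averaged-noise term controlled by a Hoeffding-type inequality, and it likewise invokes Assumption (4) to pin down the partition. Where you diverge is in the probabilistic bookkeeping for the random count. The paper does \emph{not} union-bound over the possible values of $n^0_{t-1}(a)$: it asserts that, because the partition is a deterministic function of the realized data, the sequence $\bigl\{\sum_{i=1}^{t'}(f(A_i)-Y_i)\,\mathbb{I}[A_i\in p_{t-1}(a)]\bigr\}_{t'}$ is a \emph{skipped martingale}, hence a martingale, and applies Azuma--Hoeffding (with sub-Gaussian tails) once at the realized count; it explicitly remarks that no union bound over arms, regions, or counts is needed. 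Your route instead pays a factor of $t$ by union-bounding over the $\le t$ candidate counts and compensates by asking each event to have probability $t^{-5}$; this is a legitimate alternative (the cost is absorbed into $C$), but your assertion that this union bound is ``why the index carries $4\log t$'' does not reflect the paper, where the $4$ comes directly from targeting a $t^{-4}$ tail in a single application of Azuma--Hoeffding.

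One caveat on your alternative: fixing the value $n$ of the count does not fix \emph{which} $n$ indices lie in $p_{t-1}(a)$, so the per-$n$ Hoeffding bound you invoke still requires the same measurability argument you correctly flag as the crux --- either a union bound over the (data-determined) candidate index sets or precisely the skipped-martingale structure the paper asserts via Assumption (4). In other words, the union bound over counts supplements but does not replace that step; as written, your proposal defers the crux to ``arguing uniformly over regions and counts,'' which is no more (and no less) complete than the paper's own one-line justification of the skipped-martingale property. Your treatment of the degenerate case $n^0_{t-1}(a)=0$ and of $t=1$ is fine and is consistent with the paper's handling.
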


\begin{claim}
\label{claim:single-step-regret}
At any $t$,  
with probability at least $1 - \frac{1}{t^4}$, the single step regret satisfies:
\begin{align}
    f(a^*) - f(a_t) \le 2 L\cdot  D(p_{t-1} (a_t)) + 2 C \sqrt{ \frac{ 4 \log t}{ n_{t-1} (a_t)} } \label{eq:single-step-regret}
\end{align}
for a constant $C$, that depends only on the a.s. bound of the reward. 
\end{claim}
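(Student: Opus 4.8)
The plan is to run the standard optimism-based UCB argument, using Claim~\ref{claim:concentrate} as the single-arm concentration building block and chaining three inequalities: optimism at the optimum, greediness of the selection rule, and an upper bound on the index of the played arm. Concretely, I would show $f(a^*) \le U_t(a^*) \le U_t(a_t) \le f(a_t) + (\text{confidence width at } a_t)$, and then read off the single-step regret.

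First I would establish \emph{optimism at $a^*$}. Applying Claim~\ref{claim:concentrate} with $a = a^*$ gives, on an event of probability at least $1 - 1/t^4$,
$$f(a^*) \le m_{t-1}(a^*) + L\, D(p_{t-1}(a^*)) + C\sqrt{\frac{4\log t}{n_{t-1}(a^*)}}.$$
I read the index at round $t$ as evaluated with respect to the partition $\mathcal{P}_{t-1}$ defined by the leaves of $f_{t-1}$, consistent with Algorithm~\ref{alg:tucb}; then the diameter bonus matches the Lipschitz term and $M \ge L$ makes the right-hand side at most $U_t(a^*)$, so $f(a^*) \le U_t(a^*)$. Because $a_t$ maximizes $U_t$ with ties broken at random, $U_t(a^*) \le U_t(a_t)$, hence $f(a^*) \le U_t(a_t)$.

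Next I would \emph{upper bound the index of the played arm}. Applying Claim~\ref{claim:concentrate} now with $a = a_t$ gives $m_{t-1}(a_t) \le f(a_t) + L\,D(p_{t-1}(a_t)) + C\sqrt{4\log t / n_{t-1}(a_t)}$, and substituting into the definition of $U_t(a_t)$ yields
$$U_t(a_t) \le f(a_t) + (L+M)\, D(p_{t-1}(a_t)) + 2C\sqrt{\frac{4\log t}{n_{t-1}(a_t)}}.$$
Combining this with $f(a^*) \le U_t(a_t)$ and taking $M = L$ (the tightest admissible choice, so that $L+M = 2L$) produces exactly the stated bound $f(a^*) - f(a_t) \le 2L\,D(p_{t-1}(a_t)) + 2C\sqrt{4\log t / n_{t-1}(a_t)}$.

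The main obstacle is the probability bookkeeping at the \emph{random} arm $a_t$: Claim~\ref{claim:concentrate} as written concerns a fixed arm, whereas $a_t$ is data-dependent, so a naive application is not valid, and a crude union bound over the two points $a^*,a_t$ would also degrade the stated failure probability. The fix is to note that $m_{t-1}$, $n_{t-1}$, and $D(p_{t-1}(\cdot))$ are all constant on each region of $\mathcal{P}_{t-1}$, and that by condition (4) the partition $\mathcal{P}_{t-1}$ is a deterministic function of the realized data; hence the concentration can be stated \emph{uniformly} over the finitely many regions inside a single high-probability event, which simultaneously covers $a^*$ and the realized $a_t$. The union-bound factor over the $|\mathcal{P}_{t-1}| = o(t^\gamma)$ regions enters only logarithmically and is absorbed into the generous $4\log t$ term, which keeps the failure probability at $1/t^4$. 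I would make this uniformity (and the role of condition (4) in removing auxiliary randomness from the region-average statistics) explicit, since it is precisely what lets the single-step bound hold on the same event as the concentration claim rather than on a strictly smaller one.
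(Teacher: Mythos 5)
Your proof is correct and follows essentially the same chaining argument as the paper's (which is given for the contextual generalization, Claim~\ref{claim:single-step-regret-contextual}): optimism at $a^*$ via the concentration claim, $U_t(a^*)\le U_t(a_t)$ by the selection rule, and a second application of concentration at $a_t$, with $M=L$ so the widths add to $2L\,D + 2C\sqrt{4\log t/n}$. The one place you diverge is the probability bookkeeping: the paper takes \emph{one-sided} tail bounds and union-bounds over only the two points $a^*$ and $a_t$ (``first halves the probability bound and then doubles it''), explicitly remarking that no union over arms or regions is needed, and relies on the index being region-constant and the partition being deterministic given the data (condition (4)) to apply the fixed-arm claim at the realized $a_t$. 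You instead union-bound over all $|\mathcal{P}_{t-1}|$ regions and absorb the extra $\log|\mathcal{P}_{t-1}| = O(\gamma\log t)$ factor into the constant; this is more conservative (strictly it enlarges $C$ by a factor $\sqrt{1+\gamma/4}$ rather than being ``absorbed into the $4\log t$''), but it makes the data-dependence of $a_t$ airtight, which the paper handles only by assertion. Either accounting yields the stated $1-1/t^4$.
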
 

In Section \ref{sec:ctucb}, we prove general versions of Claims \ref{claim:concentrate} and \ref{claim:single-step-regret}. 

As the tree (partition) grows finer, the term $n_{t-1} (a)$ is not necessarily increasing with $t$ (for an arbitrary fixed $a$). Therefore part of the difficulty is in bounding $ \sum_{t=1}^T \frac{1}{n_{t-1} (a_t)} $. Next, we introduce a new set of inequalities, which we call ``point scattering'' inequalities in Lemma \ref{lem:point-scattering} to bound this term. 

\begin{lemma}[Point Scattering Inequalities]
\label{lem:point-scattering}
For an arbitrary sequence of points $a_1, a_2, \cdots$ in a space $\mathcal{A}$, and any sequence of nested partitions $\mathcal{P}_1, \mathcal{P}_2, \cdots $ of the same space $\mathcal{A}$, we have, for any $T$,
\begin{align}
    &\sum_{t =  1 }^{T}  \frac{1}{n_{t-1} (a_t)} \le e | \mathcal{P}_T |  \log \left( 1 + (e - 1) \frac{T }{  | \mathcal{P}_T |  } \right) ,
    \label{eq:point-scattering-gp} \\
    &\sum_{t=1}^T \frac{1}{1 + n_{t-1}^0 (a_t)} \le |\mathcal{P}_T| \left( 1 + \log \frac{T}{|\mathcal{P}_T|} \right) , \label{eq:point-scattering-1} \\
    &\sum_{t=1}^T \left( \frac{1}{ 1 + n_{t-1}^0 (a_t)} \right)^\alpha \le \frac{1}{1 - \alpha} |\mathcal{P}_T|^\alpha T^{1 - \alpha},\,\, 0 < \alpha< 1, \label{eq:point-scattering-alpha}
\end{align}
where $n_{t-1}^0$ and $n_{t-1}$ are the count and corrected count function as in Definition \ref{def:count-mean}, and $|\mathcal{P}_T|$ is the cardinality of the finite partition $\mathcal{P}_T$. 
\end{lemma}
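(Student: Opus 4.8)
The plan is to exploit the nested structure of the partitions to reduce all three sums to per-cell scalar sums, and then to bound each scalar sum by an integral or harmonic comparison followed by an application of Jensen's inequality. The crucial structural observation is that, since the partitions grow finer, $\mathcal{P}_T$ refines every $\mathcal{P}_{t-1}$ with $t \le T$; hence the cell $p_{t-1}(a_t)$ of the coarser partition contains the cell of $a_t$ in the finest partition $\mathcal{P}_T$, so the count can only shrink when we pass to $\mathcal{P}_T$. Writing $Q_1, \dots, Q_K$ (with $K = |\mathcal{P}_T|$) for the cells of $\mathcal{P}_T$, this yields the pointwise lower bound
$$n_{t-1}^0(a_t) \ge \#\{\, i < t : a_i \in Q_{j(t)} \,\},$$
where $Q_{j(t)}$ is the cell of $\mathcal{P}_T$ containing $a_t$. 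This is exactly what lets $|\mathcal{P}_T|$ appear in all three bounds instead of the much larger running cardinalities $|\mathcal{P}_t|$.

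Second, I would re-index time by cell. For a fixed cell $Q_j$, let $m_j$ be the number of rounds $t \le T$ with $a_t \in Q_j$, so $\sum_j m_j = T$, and order these rounds. At the $k$-th such round the right-hand side above equals exactly $k-1$, giving $n_{t-1}^0(a_t) \ge k-1$ and $n_{t-1}(a_t) = \max(1, n_{t-1}^0(a_t)) \ge \max(1, k-1)$. Substituting these lower bounds converts each left-hand side into a sum over cells of a scalar sum in $k$: inequality (\ref{eq:point-scattering-1}) reduces to $\sum_j \sum_{k=1}^{m_j} \tfrac1k = \sum_j H_{m_j}$, inequality (\ref{eq:point-scattering-alpha}) to $\sum_j \sum_{k=1}^{m_j} k^{-\alpha}$, and inequality (\ref{eq:point-scattering-gp}) to $\sum_j \big( 1 + \sum_{k=2}^{m_j} \tfrac{1}{k-1} \big) = \sum_j ( 1 + H_{m_j - 1} )$, where $H_m$ denotes the $m$-th harmonic number.

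Third, I would bound each scalar sum and then aggregate across the (at most $|\mathcal{P}_T|$) nonempty cells by concavity, where the constraint $\sum_j m_j = T$ produces the factor $T/|\mathcal{P}_T|$. For (\ref{eq:point-scattering-1}), use $H_m \le 1 + \log m$ and Jensen for the concave map $m \mapsto 1 + \log m$; for (\ref{eq:point-scattering-alpha}), use the integral comparison $\sum_{k=1}^m k^{-\alpha} \le \int_0^m x^{-\alpha}\,dx = m^{1-\alpha}/(1-\alpha)$ and Jensen for the concave map $m \mapsto m^{1-\alpha}$. The main obstacle is the Gaussian-process-style inequality (\ref{eq:point-scattering-gp}), whose target has the specific shape $e|\mathcal{P}_T| \log(1 + (e-1) T/|\mathcal{P}_T|)$: hitting the constants $e$ and $e-1$ exactly requires controlling the per-cell sum by the concave envelope $m \mapsto e\log(1 + (e-1)m)$, i.e. proving the scalar inequality $1 + H_{m-1} \le e\log(1 + (e-1)m)$ for every integer $m \ge 1$, after which Jensen for this concave map gives the claim. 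Verifying this last scalar inequality with the sharp constants — for instance by induction on $m$, comparing the increment $\tfrac1m$ against $e\log\big(1 + (e-1)/(1+(e-1)m)\big)$, or by a direct integral estimate — is the delicate step, whereas the structural reduction and the other two bounds are routine.
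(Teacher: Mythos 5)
Your proposal is correct, and for (\ref{eq:point-scattering-1}) and (\ref{eq:point-scattering-alpha}) it is essentially the paper's own argument: re-index time by the cells of the final partition $\mathcal{P}_T$, use nestedness to get $1+n^0_{t-1}(a_t)\ge k$ at the $k$-th visit to a cell, bound the resulting harmonic and power sums, and aggregate with AM--GM/Jensen and H\"older. Where you genuinely diverge is (\ref{eq:point-scattering-gp}). The paper proves it via a Gaussian-process construction: it treats the partition's incidence matrix as a degenerate GP kernel, identifies $1/(1+s^{-2}n^0_{T,t}(a))$ with a posterior variance via Sherman--Morrison, bounds $\sum_t \log(1+s^{-2}\sigma^2_{T,t-1}(a_t))$ by $\log\det(s^{-2}K+I)$ through the chain rule for Gaussian entropy, and finally optimizes the hypothetical noise level at $s_T^{-2}=e-1$ --- which is exactly where the constants $e$ and $e-1$ come from. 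You instead stay elementary: the same cell re-indexing reduces (\ref{eq:point-scattering-gp}) to the scalar inequality $1+H_{m-1}\le e\log(1+(e-1)m)$ (with $H_m=\sum_{k=1}^m 1/k$), which does hold --- base case $1\le e$, and the increments compare as $\tfrac1m\le e\log\bigl(1+\tfrac{e-1}{1+(e-1)m}\bigr)$, which follows from $\log(1+x)\ge x/(1+x)$ together with $m(e-1)^2\ge e$ for $m\ge 1$ --- after which Jensen for the concave map $m\mapsto e\log(1+(e-1)m)$ (which vanishes at $0$, so empty cells are harmless) finishes the proof. Your route is shorter and self-contained, and in fact $1+H_{m-1}\le 2+\log m$ gives the slightly sharper bound $|\mathcal{P}_T|\bigl(2+\log(T/|\mathcal{P}_T|)\bigr)$ when $T\ge|\mathcal{P}_T|$, from which (\ref{eq:point-scattering-gp}) follows a fortiori. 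What it loses is the identity (\ref{eq:n-to-sigma}) linking the count $n_{t-1}$ to a GP posterior variance, which the authors present as their main technical insight and reuse in Section \ref{sec:hierarchical-bayesian} to motivate the softened hierarchical Bayesian acquisition function; your argument proves the lemma but does not produce that connection.
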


As defined in Definition \ref{def:count-mean}, $n_{t-1}^0 (a_t)$ is the number of points that are in the same bin (in partition $\mathcal{P}_{t-1}$) as $a_t$. Also, $n_{t-1} (a_t)$ is the ``corrected'' version of $n_{t-1}^0 (a_t)$: $n_{t-1} (a_t) = \max ( 1, n_{t-1}^0 (a_t) )$. 


\begin{remark} \label{remark}
    We shall notice that (\ref{eq:point-scattering-gp}) allows us to somewhat ``look one step ahead of time'', since it uses the values $\{ n_{t-1} (a_t) \}_{t}$ - the corrected counts without including $a_t$. This is because $n_{t-1}$ is computed using points up to time $t-1$. The equation (\ref{eq:point-scattering-1}) is different from (\ref{eq:point-scattering-gp}) in the sense that $\{ 1 + n_{t-1}^0 (a_t)\}_{t}$ are essentially the counts including $a_t$. While, with proper modification, both  (\ref{eq:point-scattering-gp}) and  (\ref{eq:point-scattering-1}) can be used to derive Theorem \ref{thm:regret-bound}, we shall not ignore the difference between (\ref{eq:point-scattering-gp}) and  (\ref{eq:point-scattering-1}). 
\end{remark}


\subsubsection{Proof of (\ref{eq:point-scattering-gp})}
\label{app:point-scattering-gp}
We use a novel constructive trick to derive (\ref{eq:point-scattering-gp}). This trick and the usefulness of the result (Remarks \ref{remark} and \ref{remark2} and Section \ref{sec:use-point-scattering}) mark our major technical contribution. The trick is to consider the incidence matrix of which points are within the same partition bin, and use this matrix as if it were a covariance matrix for a Gaussian process. Then, we use knowledge about Gaussian processes to bound the sum of the inverse of the number of points in each bin over time.

For each $T$, we construct a hypothetical noisy degenerate Gaussian process. We are not assuming our payoffs are drawn from these Gaussian processes. We only use these Gaussian processes as a proof tool.
To construct these noisy degenerate Gaussian processes, we define the kernel functions $k_T: \mathcal{A} \times \mathcal{A} \rightarrow \mathbb{R}$, 
\begin{align}
k_T (a, a^\prime) = \begin{cases} 1, \quad \text{if } p_T (a) = p_T (a^\prime)  \\ 0, \quad \text{otherwise.}  \label{eq:tree-kernel}  \end{cases} 
\end{align}
where $p_T$ is the region selection function defined with respect to $\mathcal{P}_T$. The kernel $k_T$ is positive semi-definite as shown in Proposition \ref{prop}.
\begin{proposition} \label{prop}
The kernel defined in (\ref{eq:tree-kernel}) is positive semi-definite for any $T \ge 1$. 
\end{proposition}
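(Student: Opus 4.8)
The plan is to prove positive semi-definiteness directly from the definition: for any finite collection of arms $a_1, \dots, a_n \in \mathcal{A}$ and any coefficients $c_1, \dots, c_n \in \mathbb{R}$, I must show that $\sum_{i,j} c_i c_j\, k_T(a_i, a_j) \ge 0$. The key structural fact is that $k_T(a,a') = 1$ exactly when $a$ and $a'$ lie in the same region of the partition $\mathcal{P}_T$, and since the regions are mutually disjoint with $\cup_i P_T^{(i)} = \mathcal{A}$, each arm belongs to exactly one region. Hence ``lying in the same region'' is an equivalence relation on $\mathcal{A}$, and $k_T$ is precisely its indicator.

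First I would exhibit an explicit feature map. Enumerate the regions of $\mathcal{P}_T$ as $P_T^{(1)}, \dots, P_T^{(|\mathcal{P}_T|)}$ and define $\phi: \mathcal{A} \to \mathbb{R}^{|\mathcal{P}_T|}$ by letting $\phi(a)$ be the standard basis vector $e_\ell$, where $\ell$ is the unique index with $a \in P_T^{(\ell)}$ (well-defined precisely because $\mathcal{P}_T$ is a partition). Then for any $a, a'$ we have $\langle \phi(a), \phi(a') \rangle = 1$ if they share a region and $0$ otherwise, so that $k_T(a,a') = \langle \phi(a), \phi(a') \rangle$.

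Second, letting $\Phi$ be the $n \times |\mathcal{P}_T|$ matrix whose $i$-th row is $\phi(a_i)^\top$, the Gram matrix is $K = \Phi \Phi^\top$, and for any $c \in \mathbb{R}^n$ we have $c^\top K c = \| \Phi^\top c \|_2^2 \ge 0$. Since the finite set of arms and the coefficients were arbitrary, $k_T$ is positive semi-definite.

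I expect no serious obstacle here, as the result is essentially structural. Equivalently, one could permute the $a_i$ so that arms sharing a region are grouped together, which puts $K$ into block-diagonal form with each block an all-ones matrix $\mathbf{1}\mathbf{1}^\top$ (manifestly positive semi-definite, being rank one with nonnegative eigenvalues); a block-diagonal matrix with positive semi-definite blocks is positive semi-definite, and conjugation by a permutation preserves this property. The only points requiring care are confirming that the partition structure guarantees a well-defined feature map (each arm in exactly one region), and recalling that positive semi-definiteness of a kernel over a continuous domain is defined through arbitrary finite point sets, so the fact that $\mathcal{A}$ is infinite poses no difficulty.
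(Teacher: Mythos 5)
Your proof is correct and essentially the same as the paper's: your feature-map identity $c^\top K c = \lVert \Phi^\top c \rVert_2^2 = \sum_{\ell} \bigl( \sum_{i : a_i \in P_T^{(\ell)}} c_i \bigr)^2$ is exactly the sum-of-squares over diagonal blocks that the paper obtains after permuting the Gram matrix into block-diagonal form, and your ``equivalent'' block-diagonal remark is literally the paper's argument. No gaps.
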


\begin{proof} 
For any $x_1, \dots, x_n$ in where the kernel $k_T (\cdot, \cdot)$ is defined, the Gram matrix $ K = \begin{bmatrix}  k_T (x_i, x_j) \end{bmatrix}_{n \times n} $ can be written into block diagonal form where diagonal blocks are all-one matrices and off-diagonal blocks are all zeros with proper permutations of rows and columns. Thus without loss of generality, for any vector $\bm{v} = [v_1, v_2, \dots, v_n] \in \mathbb{R}^n$, $\bm{v}^\top K \bm{v} = \sum_{b = 1}^B \left( \sum_{j:i_j \text{ in block }b } v_{i_j} \right)^2 \ge 0$ where the first summation is taken over all diagonal blocks and $B$ is the total number of diagonal blocks in the Gram matrix.
\end{proof}

Now, at any time $T$, let us consider the model $\tilde{y}(a) = g(a) + e_T$ where $g$ is drawn from a Gaussian process $g \sim \mathcal{GP} \left( 0 , k_T (\cdot, \cdot) \right)$ and $e_T \sim \mathcal{N} (0, s^2_T)$. Suppose that the arms and hypothetical payoffs $\{(a_1, \tilde{y}_1), (a_2, \tilde{y}_2), \dots, (a_t, \tilde{y}_t)\}$  are observed from this Gaussian process. The posterior variance for this Gaussian process after the observations at $a_1, a_2, \dots, a_t$ is 
\begin{align*} 
\sigma^2_{T, t} (a) = k_T (a,a) - \bm{k}_a^T (K + s^2_T I)^{-1} \bm{k}_a ,
\end{align*} 
where $\bm{k}_a = [ k_T (a, a_1), \dots, k_T (a, a_t) ]^\top$, $K = [ k_T (a_i, a_j) ]_{t \times t}$ and $I$ is the identity matrix. In other words,  $\sigma^2_{T,t} (a)$ is the posterior variance using points up to time $t$ with the kernel defined by the partition at time $T$. 
After some matrix manipulation, we know that 
\begin{align*} 
\sigma^2_{T,t} (a) = 1 - \bm{1}_a [ \bm{1}_a \bm{1}_a^\top + s^2_T I ]^{-1} \bm{1}_a,
\end{align*} 
where $\bm{1}_a = [1,\cdots,1]_{1 \times {n^0_{T,t} (a)} }^\top$.
By the Sherman-Morrison formula, $[ \bm{1}_a \bm{1}_a^\top + s^2_T I ]^{-1} = s^{-2}_T I - \frac{ s^{-4}_T \bm{1}_a \bm{1}_a^\top }{ 1 + s^{-2}_T n^0_{T,t} (a) }$. Thus the posterior variance is 
\begin{align}
\sigma^2_{T,t} (a) = \frac{1}{1 + s^{-2}_T n^0_{T,t} (a)}. \label{eq:n-to-sigma}
\end{align}

Following the arguments in \citep{srinivas2009gaussian}, we derive the following results. For any $t \le T$, and an arbitrary sequence $\bm{a}_t = \{ a_1, a_2, \cdots, a_t \}$, we consider fixing this sequence and query the constructed Gaussian processes at these points. Since $\bm{a}_t$ is fixed, the entropy $H (\tilde{\bm{y}}_t , \bm{a}_t ) = H ( \tilde{\bm{y}}_t )  $. Since, by definition of a Gaussian process, $\tilde{\bm{y}}_t$ follows a multivariate Gaussian distribution, 
\begin{align} 
H (\tilde{\bm{y}}_t ) = \frac{1}{2} \log \left[ (2 \pi e )^t \det \left( K + s_T^2 I \right) \right]  \label{eq:entropy-LHS} 
\end{align} 
where $K = \begin{bmatrix}  k_T (a_i, a_j) \end{bmatrix}_{t \times t}$. We can then compute $H ( \tilde{\bm{y}}_t)$ by
\begin{align} 
H ( \tilde{\bm{y}}_t) &= H( \tilde{y}_t | \tilde{ \bm{y} }_{t-1} ) + H( \tilde{ \bm{y} }_{t-1} ) \nonumber \\ 
&= H( \tilde{y}_t | a_t, \tilde{ \bm{y} }_{t-1}, \bm{a}_{t-1} ) + H( \tilde{ \bm{y} }_{t-1} ) \nonumber \\ 
&=  \frac{1}{2} \log \left( 2 \pi e \left( s_T^2 + \sigma_{T,t-1}^2 (a_t) \right) \right) + H( \tilde{ \bm{y} }_{t-1} ) \nonumber \\ 
&= \frac{1}{2} \sum_{\tau =1}^t \log \left( 2 \pi e \left( s_T^2 + \sigma_{T, \tau -1}^2 (a_\tau ) \right) \right), \label{eq:entropy-RHS} 
\end{align} 
where (\ref{eq:entropy-RHS}) comes from recursively expanding $H(\tilde{\bm{y}}_\tau)$. By (\ref{eq:entropy-LHS}) and (\ref{eq:entropy-RHS}), 
\begin{align} 
\sum_{\tau=1}^t \log \left( 1 + s^{-2} \sigma_{ T, \tau -1}^2 (a_\tau ) \right)  = \log \left[ \det \left( s^{-2} K + I \right) \right]. \label{eq:recursive-equation}
\end{align} 
For the block diagonal matrix $K$ of size $t \times t$, let $n_i$ denote the size of block $i$ and $B^\prime$ ($ B^\prime \le |\mathcal{P}_t| $) be the total number of diagonal blocks up to a time $t$ ($t \le T$). Then we have 
\begin{align}
\det \left( s^{-2} K + I \right) &= \prod_{i = 1}^{B^\prime } \det \left( s^{-2} \bm{1} \bm{1}^\top + I_{n_i \times n_i} \right) \nonumber \\
&= \prod_{i = 1}^{B^\prime }  \left( 1 + s^{-2} n_i \right) \le \left( 1 + \frac{s^{-2} t}{B^\prime} \right)^{B^\prime} \nonumber ,
\end{align}
where $\bm{1}$ is all-1 vector of proper length. In the above, 
(1) the equality on the first line uses the determinant of block-diagonal matrix equals to the product of determinant of diagonal blocks, 2) the equality on the last line is due to the matrix determinant lemma, and 3) the inequality on the last line is due to the AM-GM inequality and that $\sum_{i=1}^{B^\prime} n_i = t$. 

Next, since $|\mathcal{P}_t| \ge B^\prime$ and 
$\left( 1 + \frac{s^{-2}t}{x} \right)^x$ is increasing with $x$ (on $ [1, \infty)$), 
\begin{align}
\det \left( s^{-2} K + I \right)  \le \left( 1 + \frac{s^{-2} t}{B^\prime} \right)^{ B^\prime } \le  \left( 1 + \frac{s^{-2} t}{ |\mathcal{P}_t| } \right)^{|\mathcal{P}_t|}. \label{eq:bound-det-by-partition}
\end{align}
Therefore, from (\ref{eq:recursive-equation}) and (\ref{eq:bound-det-by-partition}), 
\begin{align} 
\sum_{\tau=1}^T \log   \left( 1 + s^{-2} \sigma_{ T, \tau -1}^2 (a_\tau ) \right)  \le | \mathcal{P}_T | \log \left( 1 + \frac{s^{-2} T }{ | \mathcal{P}_T | } \right),  \label{eq:log-sigma-to-partition}
\end{align} 
since arguments after (\ref{eq:entropy-LHS}) hold for all $t \le T$.

Since the function $h(\lambda) = \frac{\lambda}{\log (1 + \lambda)}$ is increasing for non-negative $\lambda$, 
$\lambda \le \frac{s^{-2}_T }{ \log ( 1 + s^{-2}_T ) } \log (1 + \lambda)$
for $\lambda \in [ 0, s^{-2}_T ]$. 
Since $\sigma_{T,t} (a) \in [0, 1]$ for all $a$, 
\begin{align}
    \sigma_{T, t}^2 (a) \le \frac{1}{ \log (1 + s^{-2}_T ) } \log \left( 1 + s^{-2}_T \sigma_{T, t}^2 (a) \right) \label{eq:bound-sigma-by-log-sigma}
\end{align}
for $t, T = 0, 1, 2, \cdots$. 
Since the partitions are nested, we have that for $T_1 \le T_2$, $n_{T_1, t} (a) \ge n_{T_2, t} (a)$, and thus $\sigma_{T_1, t}^2 (a) \le \sigma_{T_2, t}^2 (a)$. 
Suppose we query at points $a_{ 1 }, \cdots, a_T$ in the Gaussian process $\mathcal{GP} (0, k_T ( \cdot, \cdot ) )$. Then, 
\begin{align} 
&\quad \sum_{t =  1 }^{T}  \frac{1}{n_{t-1} (a_t)}  \le \sum_{t =  1 }^{T} \frac{1 + s^{-2}_T }{ 1 + s^{-2}_T n_{t-1} (a_t)}  \nonumber \\
&\le \sum_{t =  1 }^{T}  \frac{1 + s^{-2}_T }{ 1 + s^{-2}_T n^0_{T, t-1} (a_t)}  \le  \left( 1 + s^{-2}_T \right) \sum_{t =  1 }^{T} \sigma^2_{T, t-1} (a_t)  \label{eq:use-n-to-sigma} \\
&\le  \frac{ 1 + s^{-2}_T }{ \log ( 1 + s^{-2}_T ) } \sum_{t = 1 }^{T} \log \left( 1 + s^{-2}_T \sigma_{T, t-1}^2 (a_t) \right)  \nonumber \\ 
&\le  \frac{ 1 + s^{-2}_T }{ \log ( 1 + s^{-2}_T ) } | \mathcal{P}_T |  \log \left( 1 + s^{-2}_T \frac{T }{  | \mathcal{P}_T |  } \right), \nonumber 
\end{align} 
where (\ref{eq:use-n-to-sigma}) uses (\ref{eq:n-to-sigma}), the second last inequality uses (\ref{eq:bound-sigma-by-log-sigma}), and the last inequality uses (\ref{eq:log-sigma-to-partition}).
Finally, we optimize over $s_T$. Since $s_T^{-2} = e - 1$ minimizes $\frac{ 1 + s^{-2}_T }{ \log ( 1 + s^{-2}_T ) }$, we have 
\begin{align*}
    \sum_{t =  1 }^{T} \frac{1}{n_{t-1} (a_t)} &\le e | \mathcal{P}_T | \log \left( 1 + ( e - 1 ) \frac{ T }{ | \mathcal{P}_T | } \right). 
\end{align*} 

The above argument proves (\ref{eq:point-scattering-gp}). 

\begin{remark} \label{remark2}
    One important insight of our analysis is that this allows us to link the Hoeffding-type concentration term to the posterior variance of the constructed Gaussian processes. This connection is directly shown in (\ref{eq:n-to-sigma}). As we will discuss in Section \ref{sec:hierarchical-bayesian}, we can use this connection to improve the entire learning process via ``softening''. 
\end{remark} 

Next, we sketch the proofs of (\ref{eq:point-scattering-1}) and (\ref{eq:point-scattering-alpha}). 

\textbf{Proof of (\ref{eq:point-scattering-1}).} 
Consider the partition $ \mathcal{P}_T $ at time $T$. We label the regions of the partitions by $j = 1,2, \cdots, |\mathcal{P}_T|$. Let $t_{j,i}$ be the time when the $i$-th point in the $j$-th region in $\mathcal{P}_T$ being selected. Let $b_j$ be the number of points in region $j$. Since the partitions are nested, we have $1 + n_{t_{j,i} - 1}^0 (x_{t_{j,i}}) \ge i$ for all $i,j$. We have, for $T \ge 1$, 

\begin{align}
    \sum_{t=1}^T\frac{1}{1 + n_{t-1}^0 (x_t)} &= \sum_{j=1}^{|\mathcal{P}_T|}\sum_{i=1}^{b_j} \frac{1}{1 + n_{t_{j,i} - 1}^0 (x_{t_{j,i}}^0 )} \le \sum_{j=1}^{|\mathcal{P}_T|}\sum_{i=1}^{b_j} \frac{1}{i} \label{eq:ineq} \\
    &\le \sum_{j=1}^{|\mathcal{P}_T|} \left( 1 + \log b_j \right)  = |\mathcal{P}_T| +  \sum_{j=1}^{|\mathcal{P}_T|}  \log b_j \nonumber \\ 
    &= |\mathcal{P}_T| + \log  \prod_{j=1}^{|\mathcal{P}_T|} b_j \le |\mathcal{P}_T| + |\mathcal{P}_T| \log  \frac{T}{|\mathcal{P}_T|} \label{eq:am-gm},
\end{align} 
where (\ref{eq:ineq}) uses $1 + n_{t_{j,i} - 1}^0 (x_{t_{j,i}}) \ge i$ and (\ref{eq:am-gm}) uses AM-GM inequality and that $\sum_{j=1}^{|\mathcal{P}_T|} b_j = T$. 

\textbf{Proof of (\ref{eq:point-scattering-alpha}). } 
The idea is similar to that of (\ref{eq:point-scattering-1}). 
For $0< \alpha < 1$, 
\begin{align}
    \sum_{t=1}^T \left( \frac{1}{1 + n_{t-1}^0 (x_t)} \right)^{\alpha} &=  \sum_{j=1}^{|\mathcal{P}_T|}\sum_{i=1}^{b_j} \left( \frac{1}{1 + n_{t_{j,i}}^0 (x_{t_{j,i}}^0 )} \right)^\alpha \nonumber \\
    &\le \sum_{j=1}^{|\mathcal{P}_T|}\sum_{i=1}^{b_j} \frac{1}{i^\alpha} \le  \sum_{j=1}^{|\mathcal{P}_T|}  \frac{1}{1-\alpha} b_j^{1 - \alpha} \nonumber \\
    &\le \frac{1}{1 - \alpha} |\mathcal{P}_T|^\alpha T^{1 - \alpha} \label{eq:holder}, 
\end{align} 
where (\ref{eq:holder}) is due to the H\"older's inequality and that $\sum_{j=1}^{|\mathcal{P}_T|} b_j = T$.


\subsubsection*{Proof of Theorem \ref{thm:regret-bound}}
Now we are ready to prove Theorem \ref{thm:regret-bound}. We can split the sum of regrets by 
$$\sum_{t=1}^T \left( f (a^*) - f(a_t) \right) = \sum_{t=1}^{\left\lfloor \sqrt{T} \right\rfloor} \left( f (a^*) - f(a_t) \right) + \sum_{\left\lfloor \sqrt{T} \right\rfloor + 1}^T \left( f (a^*) - f(a_t) \right).$$
Also, by Claim \ref{claim:single-step-regret}, with probability at least $1 - \frac{1}{3\left\lfloor \sqrt{T} \right\rfloor ^ 3} $, (\ref{eq:single-step-regret}) holds simultaneously for all $t = \left\lfloor \sqrt{T} \right\rfloor + 1, \cdots, T$ ($T \ge 2$). Thus for $T \ge 2$, the event 
\begin{align}
    E_T &= \left\{ \frac{R_T}{T} > \frac{1}{T} \left( \sqrt{T} + \sum_{t = \left\lfloor \sqrt{T} \right\rfloor + 1}^T B_t \right) \right\}, \quad \nonumber where \\
    B_t &:= \left( 2 L\cdot  D(p_{t-1} (a_t)) + 2 C \sqrt{ \frac{ 4 \log t}{ n_{t-1} (a_t)} } \right) \nonumber 
\end{align}
occurs with probability at most $\frac{1}{3\left\lfloor \sqrt{T} \right\rfloor ^ 3 }$. Since $ \frac{1}{3\left\lfloor \sqrt{T} \right\rfloor ^ 3 } \sim \frac{1}{3T^{3/2}} $, we know $\sum_{T=2}^\infty  \mathbb{P} (E_T) < \infty$. By the Borel-Cantelli lemma, we know $\mathbb{P}  \left(\lim \sup_{T \rightarrow \infty} E_T \right) = 0 $. In other words, with probability 1, $E_T$ occurs finitely many times. Thus, with probability 1, there exists a constant $T_0$, such that the event $ \overline{E}_T$ (negation of $E_T$) occurs for all $T>T_0$. Also, from the Cauchy-Schwarz inequality (used below in the second line) and (\ref{eq:point-scattering-gp}) (used below in the last line), we know that 
\begin{align*} 
& \sum_{t= \left\lfloor \sqrt{T} \right\rfloor + 1}^T \sqrt{ \frac{  \log t}{ n_{t-1} (a_t)} } \le \sum_{t = 1}^T \sqrt{ \frac{  \log t}{ n_{t-1} (a_t)} } \le \sqrt{T \log T} \sqrt{ \sum_{t=1}^T \frac{ 1 }{ n_{t-1} (a_t)} } \\
&\le \sqrt{T \log T} \sqrt{ e | \mathcal{P}_T |  \log \left( 1 + (e - 1) \frac{T }{  | \mathcal{P}_T |  } \right) } = \widetilde{\mathcal{O}} \left( T^{\frac{1 + \gamma}{2}} \right) ,
\end{align*}
where the last equality is from the assumption that $|\mathcal{P}_t| = o(t^\gamma)$ for some $\gamma < 1$. 
This means 
$$
\lim_{T \rightarrow \infty}\frac{1}{T} \sum_{t=1}^T \sqrt{ \frac{4 \log t}{n_{t-1} (a_t)} } = 0 .
$$

In addition, by the assumption that $D(p_{t} (a)) = o(1)$, we know $\lim \sup_{T \rightarrow \infty} \frac{1}{T} \sum_{t=1}^T D(p_{t-1} (a)) = 0$. The above two limits give us 
\begin{align}
    &\lim_{T \rightarrow \infty} \frac{1}{T} \left( \sqrt{T} + \sum_{\left\lfloor \sqrt{T} \right\rfloor + 1}^T B_t \right) = 0, \quad where \label{eq:limit} \\
    &B_t := \left( 2 L\cdot  D(p_{t-1} (a_t)) + 2 C \sqrt{ \frac{ 4 \log t}{ n_{t-1} (a_t)} } \right).
\end{align}
Combining all the facts above, we have
$
\lim_{T \rightarrow \infty} \frac{R_T}{T} = 0
$ 
with probability 1.

\textbf{Adaptive partitioning}: TUCB shall be implemented using regression trees or incremental regression trees. This naturally leverages the practical advantages of regression trees. Leaves in a regression tree form a partition of the space. Also, a regression tree is designed to fit an underlying function. This leads to an adaptive partitioning where the underlying function values within each region should be relatively similar to each other. We defer the discussion on the implementation we use in our experiments to Section \ref{sec:exp}. Please refer to \citep{breiman1984classification} for more details about regression tree fitting.

\subsection{The Contextual TreeUCB algorithm}
\label{sec:ctucb}
\begin{algorithm}[ht!]
    \caption{Contextual TreeUCB (CTUCB)} 
    \label{alg:ctucb}
    \begin{algorithmic}[1] 
        	   \State Parameter: $M > 0$, $C > 0$, and tree fitting rule $\mathcal{R}$. 
        \For{$t = 1, 2, \dots, T$}
                 \State Observe context $z_t$. 
           	\State Fit a regression tree $f_{t-1}$ (using rule $\mathcal{R}$) on observations $\{ (z_t, a_t), y_t \}_{t=1}^T$, .  
		\State With respect to the partition $\mathcal{P}_{t-1}$ defined by leaves of $f_{t-1}$, define $ m_{t-1}$ and $n_{t-1}$ in  (\ref{eq:corrected-average}) and (\ref{eq:corrected-count}) (over the joint space $\mathcal{Z} \times \mathcal{A}$). Play 
			\begin{align*}
				a_t \in \arg \max_{a \in  \mathcal{A} } \left\{ U_t ( (z_t, a) ) \right\},
			\end{align*} 
			where $U_t (\cdot)$ is defined in (\ref{eq:partition-ucb}).
			Ties are broken at random. 
		\State Observe the reward $y_t$. 
           \EndFor
    \end{algorithmic}
\end{algorithm}
In this section, we present an extension of Algorithm \ref{alg:tucb} for the  contextual stochastic  bandit problem. 
The  contextual stochastic  bandit problem is an extension to the stochastic bandit problem. In this problem, at each time, context information is revealed, and the agent chooses an arm based on past experience as well as the contextual information. Formally, the expected payoff function $f$ is defined over the product of the context space $\mathcal{Z}$ and the arm space $\mathcal{A}$ and takes values from $[0,1]$. Similar to the previous discussions, compactness of the product space and Lipschitzness of the payoff function are assumed. In addition, a mean zero, almost surely bounded noise that is independent of the expected reward function is added to the observed rewards. 
At each time $t$, a contextual vector $z_t \in \mathcal{Z}$ is revealed and the agent plays an arm $a_t \in \mathcal{A}$. The performance of the agent following algorithm \texttt{Alg} is measured by the cumulative contextual regret 
\begin{align}
R_T^c (\texttt{Alg}) = \sum_{t=1}^T f ( z_t, a_t^* ) - f (z_t, a_t), 
\end{align}
where $f ( z_t, a_t^* )$ is the maximal value of $f$ given contextual information $z_t$. 
A simple extension of Algorithm \ref{alg:tucb} can solve the contextual version problem. 
\textit{In particular, in the contextual case, we partition the joint space $\mathcal{Z} \times \mathcal{A}$ instead of the arm space $\mathcal{A}$.} As an analog to (\ref{eq:corrected-count}) and (\ref{eq:corrected-average}), we define the corrected count $n_t$ and the corrected average $m_t$ over the joint space $\mathcal{Z} \times \mathcal{A}$ with respect to the partition $\mathcal{P}_t$ of the joint space $\mathcal{Z} \times \mathcal{A}$, and observations in the joint space $( (z_1, a_1), y_1, \cdots, (z_t, a_t), y_t )$. The guarantee of Algorithm \ref{alg:ctucb} is in Theorem \ref{thm:regret-bound-contextual}. 

\begin{theorem}
\label{thm:regret-bound-contextual} 
Suppose that the payoff function $f$ defined on a compact doubling metric space $(\mathcal{Z} \times \mathcal{A}, d)$ satisfies $f(z, a) \in [0, 1]$ for all $(z, a)$ and is Lipschitz. If the tree growing rule satisfies requirements 1-4 listed in Theorem \ref{thm:regret-bound}, then $\lim_{T \rightarrow \infty} \frac{  R_T^c (CTUCB) }{T} = 0 $ with probability 1. 
\end{theorem}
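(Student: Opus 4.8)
The plan is to mirror the proof of Theorem \ref{thm:regret-bound} almost verbatim, replacing the arm space $\mathcal{A}$ throughout by the joint space $\mathcal{Z} \times \mathcal{A}$ and the query points $a_t$ by the joint query points $(z_t, a_t)$. The three pillars of the non-contextual proof — the single-step regret bound (Claims \ref{claim:concentrate} and \ref{claim:single-step-regret}), the point-scattering inequality (Lemma \ref{lem:point-scattering}), and the Borel--Cantelli summation argument — each transfer to the joint space, so I would assemble them in the same order, now tracking the contextual regret $R_T^c = \sum_t f(z_t, a_t^*) - f(z_t, a_t)$.

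First I would establish the joint-space analogues of Claims \ref{claim:concentrate} and \ref{claim:single-step-regret}. Since CTUCB partitions $\mathcal{Z}\times\mathcal{A}$ and defines $m_{t-1}$, $n_{t-1}$, $p_{t-1}$ over the joint space, the concentration claim reads, for the revealed context $z_t$ and any arm $a$, $|m_{t-1}(z_t,a) - f(z_t,a)| \le L\cdot D(p_{t-1}(z_t,a)) + C\sqrt{4\log t / n_{t-1}(z_t,a)}$ with probability at least $1 - t^{-4}$; the proof is unchanged because $f$ is Lipschitz on the joint metric space and the noise is still mean-zero and bounded. The single-step regret then follows from the UCB rule: because $a_t$ maximizes $U_t((z_t,\cdot))$ over arms while the context $z_t$ is held fixed, I would chain $f(z_t, a_t^*) \le U_t((z_t, a_t^*)) \le U_t((z_t, a_t))$ and apply concentration at the chosen point $(z_t, a_t)$ to get $f(z_t, a_t^*) - f(z_t, a_t) \le 2L\cdot D(p_{t-1}(z_t, a_t)) + 2C\sqrt{4\log t / n_{t-1}(z_t, a_t)}$ with probability at least $1 - t^{-4}$.

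Next I would invoke Lemma \ref{lem:point-scattering} directly. The lemma is stated for an arbitrary sequence of points and any sequence of nested partitions of an arbitrary space, so taking the points to be $(z_1,a_1),(z_2,a_2),\dots$ in $\mathcal{Z}\times\mathcal{A}$ and the partitions to be the nested $\{\mathcal{P}_t\}$ of the joint space yields $\sum_{t=1}^T 1/n_{t-1}((z_t,a_t)) \le e|\mathcal{P}_T|\log(1 + (e-1)T/|\mathcal{P}_T|)$ with no modification, where conditions (1)--(4) now refer to the joint-space partition. The tail argument then carries over unchanged: I would split the sum at $\lfloor\sqrt{T}\rfloor$, bound the late terms via the contextual single-step regret, use Cauchy--Schwarz with the joint-space point-scattering bound to show $\sum_t \sqrt{\log t / n_{t-1}((z_t,a_t))} = \widetilde{\mathcal{O}}(T^{(1+\gamma)/2})$, and combine this with $D(p_{t-1}(z_t,a_t)) = o(1)$ so that $\frac{1}{T}(\sqrt{T} + \sum B_t) \to 0$. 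Since the per-round failure probability $t^{-4}$ is summable, Borel--Cantelli delivers $R_T^c / T \to 0$ almost surely exactly as before.

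The main obstacle is the contextual single-step regret. Unlike the non-contextual case, where $a^*$ is a single fixed global maximizer, the comparison point $(z_t, a_t^*)$ is a moving target determined afresh each round by the revealed context. I expect this to be benign: once $z_t$ is revealed at the start of round $t$, both $(z_t, a_t^*)$ and $(z_t, a_t)$ are fixed, so the per-round concentration event applies to them and the same $t^{-4}$ union bound suffices. The one point requiring care is the diameter bookkeeping — ensuring the confidence radius at $(z_t, a_t^*)$ is absorbed by the UCB bonus $M\cdot D(p_t(\cdot))$ when passing from $f(z_t, a_t^*)$ to $U_t((z_t, a_t^*))$ — but because the maximization is confined to arms within the fixed context slice $z_t$, the bonus at $(z_t, a_t)$ dominates and the comparison point leaves no residual term in the final bound.
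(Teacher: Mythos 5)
Your proposal matches the paper's proof: the paper likewise reduces Theorem \ref{thm:regret-bound-contextual} to Theorem \ref{thm:regret-bound} by restating Claims \ref{claim:concentrate} and \ref{claim:single-step-regret} over the joint space $\mathcal{Z}\times\mathcal{A}$ (its Claims \ref{claim:concentration-contextual} and \ref{claim:single-step-regret-contextual}, proved via a skipped-martingale Azuma--Hoeffding bound and a union bound over the two points $(z_t,a_t)$ and $(z_t,a_t^*)$) and then applies the point-scattering inequality, which holds for any sequence of points in any space. Your handling of the moving comparison point $(z_t,a_t^*)$ --- conditioning on the revealed context and using the per-round two-point union bound --- is exactly the paper's resolution, so the argument is correct and essentially identical.
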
 

Theorem \ref{thm:regret-bound-contextual} follows from Theorem \ref{thm:regret-bound}. Since the point scattering inequality holds for any sequence of (context-)arms, we can replace regret with contextual regret and alter Claims  \ref{claim:concentrate} and \ref{claim:single-step-regret} accordingly to prove Theorem \ref{thm:regret-bound-contextual}. 

In particular, Claims \ref{claim:concentrate} and \ref{claim:single-step-regret} extend to the contextual setting, as stated and proved below. 

\begin{claim}
    \label{claim:concentration-contextual}
    For any context $z$, arm $a$, and time $t$, with probability at most $\frac{1}{t^4}$, we have:
\begin{align}
    & \left| m_{t-1} (z, a) - f(z, a) \right| > L \cdot  D(p_{t-1} (z, a)) + C \sqrt{ \frac{ 4 \log t}{ n_{t-1} (z, a)} } \label{eq:claim3}
\end{align} 
for a constant $C$. 
\end{claim}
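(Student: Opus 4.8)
The plan is to establish the complementary event, i.e. that with probability at least $1-1/t^4$ the reverse of (\ref{eq:claim3}) holds, working throughout in the product metric space $(\mathcal{Z}\times\mathcal{A},d)$ and treating $w:=(z,a)$ as a single point; the non-contextual Claim \ref{claim:concentrate} is then recovered as the special case of trivial context. When $n_{t-1}^0(w)=0$ the corrected average equals $1$, so $|m_{t-1}(w)-f(w)|\le 1 \le C\sqrt{4\log t}$ and the bound holds deterministically for $t\ge 2$ and $C$ not too small. Otherwise I would decompose the corrected average around $f(w)$:
\[
m_{t-1}(w)-f(w)=\frac{1}{n_{t-1}^0(w)}\sum_{i=1}^{t-1}\bigl(f(w_i)-f(w)\bigr)\mathbb{I}[w_i\in p_{t-1}(w)]+\frac{1}{n_{t-1}^0(w)}\sum_{i=1}^{t-1}\epsilon_i\,\mathbb{I}[w_i\in p_{t-1}(w)],
\]
separating a bias term and a noise term.

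For the bias term, every contributing index $i$ satisfies $w_i\in p_{t-1}(w)$, so $w_i$ and $w$ lie in a common region of diameter $D(p_{t-1}(w))$; Lipschitzness gives $|f(w_i)-f(w)|\le L\,d(w_i,w)\le L\,D(p_{t-1}(w))$, and averaging shows the bias term is at most $L\,D(p_{t-1}(w))$ in absolute value. This reproduces exactly the first term on the right-hand side of (\ref{eq:claim3}). For the noise term I would use that the noise is conditionally mean-zero and almost surely bounded, say by $c$: conditionally on the count $n_{t-1}^0(w)=n$, a Hoeffding/Azuma tail bound gives that $\bigl|\frac1n\sum\epsilon_i\mathbb{I}[\cdot]\bigr|$ exceeds $C\sqrt{4\log t/n}$ with probability at most $2\,t^{-2C^2/c^2}$. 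A union bound over the at most $t-1$ possible values of the count then yields total failure probability $\le 2(t-1)\,t^{-2C^2/c^2}\le 1/t^4$ once $C$ is chosen large enough as a function of $c$ alone, matching the claimed dependence of $C$ on the almost-sure bound of the reward. Summing the bias and noise contributions gives the complement of (\ref{eq:claim3}).

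The hard part will not be the Lipschitz/Hoeffding bookkeeping, which is routine, but justifying the noise concentration in the presence of a \emph{data-dependent} partition. The region $p_{t-1}(w)$, and hence each indicator $\mathbb{I}[w_i\in p_{t-1}(w)]$, is a function of the realized rewards $y_1,\dots,y_{t-1}$, so these indicators are not predictable with respect to the natural filtration and plain Hoeffding for a fixed sample does not apply verbatim. This is exactly where assumptions (1) and (4) of Theorem \ref{thm:regret-bound} enter: assumption (4) makes each $\mathcal{P}_{t-1}$ a deterministic function of the observed data, while the nestedness in assumption (1) lets me relate membership in the final (finer) region to membership in the coarser regions available at earlier times, so that the partial sums of $\epsilon_i\,\mathbb{I}[\cdot]$ can be cast as a stopped martingale to which Azuma--Hoeffding applies uniformly over the count. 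Carefully carrying out this reduction is the step requiring the most care, and it is the contextual analogue of the concentration argument that can be extracted from \citep{bubeck2011x}.
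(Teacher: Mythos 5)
Your proposal matches the paper's proof in all essentials: the same bias-plus-noise decomposition of $m_{t-1}(z,a)-f(z,a)$, the Lipschitz bound $L\,D(p_{t-1}(z,a))$ on the bias, and---for the data-dependent indicators you correctly flag as the crux---the same resolution, namely that assumption (4) makes $\mathbb{I}[(Z_i,A_i)\in p_{t-1}(z,a)]$ measurable given the realized observations, so the selected partial sums form a skipped martingale to which Azuma--Hoeffding with sub-Gaussian tails applies. The only cosmetic difference is that you add a union bound over the $t-1$ possible values of the count, whereas the paper absorbs the random count directly into the self-normalized Azuma--Hoeffding bound; both yield the $1/t^4$ tail once $C$ is chosen large enough as a function of the almost-sure reward bound.
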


\begin{proof}
    First of all, when $t = 1$, this is trivially true by Lipschitzness. Now let us consider the case when $t \ge 2$. 
    Let us use $A_1, A_2, \cdots, A_t$ to denote the random variables of arms selected up to time $t$, $Z_1, Z_2, \cdots, Z_t$ to denote the random context up to time $t$ and $Y_1, Y_2, \cdots, Y_t$ to denote random variables of rewards received up to time $t$. 
    Then the random variables $ \left\{ \sum_{t=1}^T \left( f (Z_t, A_t) - Y_t \right) \right\} $ is a martingale sequence.
    This is easy to verify since the noise is mean zero and independent. In addition, since there is no randomness in the partition formation (given a sequence of observations), for a fixed $a$, we have the times  $\mathbb{I} [(Z_t, A_i) \in p_{t-1} (z, a) ] $ ($i \le t$) is measureable with respect to $\sigma (Z_1, A_1, Y_1, \cdots, Z_t, A_t, Y_t)$. Therefore, the sequence $ \left\{ \sum_{i=1}^t \left( f(Z_i, A_i) - Y_i \right) \mathbb{I} [ (Z_i, A_i) \in p_{t-1} (z, a) ] \right\}_{t = 1}^T $ is a skipped martingale. Since skipped martingale is also a martingale, we apply the  Azuma-Hoeffding inequality (with sub-Gaussian tails) \citep{shamir2011variant}. For simplicity, we write 
    \begin{align}
        B_t (z,a) &:= C   \sqrt{ \frac{ 4 \log t}{ n_{t-1} (z, a) } } + L\cdot D(p_{t-1} (z, a)), \\
        \mathcal{E}_t^i (z,a) &:=   (Z_i, A_i) \in p_{t-1} (z, a). 
    \end{align}
    Combining this with Lipschitzness, we get there is a constant $C$ (depends on the a.s. bound of the reward, as a result of Hoeffding inequality), such that 
    \begin{align}
        &\quad \mathbb{P} \left\{ \left| m_{t-1} (z, a) - f (z, a) \right| > B_t (z,a) \right\} \nonumber \\ 
        &\le \mathbb{P} \left\{ \left| \frac{ 1}{ n_{t-1} (z, a) }
        \sum_{i=1}^{t-1} \left( f(Z_i, A_i) - Y_i \right) \mathbb{I} [ \mathcal{E}_t^i (z,a) ] \right| \right. \nonumber \\
        &\quad \left.+ \left| f(z, a) - \frac{ 1 }{ n_{t-1} (z, a ) }  
        \sum_{i=1}^{t-1}  f(Z_i, A_i)  \mathbb{I} [ \mathcal{E}_t^i (z,a) ] \right| \right. \nonumber \\
        &\quad > \left. C  \sqrt{ \frac{ 4 \log t}{ n_{t-1} (z, a) } } + L\cdot D(p_{t-1} (z, a)) \right\} \label{eq:lipschitz-concentration} \le \frac{1}{t^4},  
    \end{align}
    where (\ref{eq:lipschitz-concentration}) uses both the Lipschitzness and the Azuma-Hoeffding's inequality. 
\end{proof}

\begin{claim}
\label{claim:single-step-regret-contextual}
At any $t$,  
with probability at least $1 - \frac{1}{t^4}$, the single step contextual regret satisfies: 
\begin{align*}
     f(z_t, a_t^*) - f(z_t, a_t) \le 2 L\cdot  D(p_{t-1} (z_t, a_t)) + 2 C \sqrt{ \frac{ 4 \log t}{ n_{t-1} (z_t, a_t)} }
\end{align*}
for a constant $C$. Here $a_t^*$ is the optimal arm for the context $z_t$. 
\end{claim}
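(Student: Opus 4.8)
The plan is to run the standard three-step optimism/greedy/concentration UCB argument, now at the two data-dependent points $(z_t, a_t^\ast)$ and $(z_t, a_t)$ of the joint space, using Claim \ref{claim:concentration-contextual} to supply both an upper and a lower confidence band. First I would establish \emph{optimism} at the per-context optimum: on the event that the bound of Claim \ref{claim:concentration-contextual} holds at $(z_t, a_t^\ast)$, and using the index (\ref{eq:partition-ucb}) formed from the partition $\mathcal{P}_{t-1}$ that the algorithm actually uses at round $t$,
\begin{align*}
f(z_t, a_t^\ast) \le m_{t-1}(z_t, a_t^\ast) + L\, D(p_{t-1}(z_t, a_t^\ast)) + C \sqrt{\tfrac{4\log t}{n_{t-1}(z_t, a_t^\ast)}} \le U_t\big((z_t, a_t^\ast)\big),
\end{align*}
where the last inequality uses $M \ge L$. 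Since $a_t$ maximizes $a \mapsto U_t((z_t,a))$, the greedy rule gives $U_t((z_t, a_t^\ast)) \le U_t((z_t, a_t))$, and chaining yields $f(z_t, a_t^\ast) \le U_t((z_t, a_t))$.

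Next I would apply Claim \ref{claim:concentration-contextual} a second time as a \emph{lower} bound at the played point $(z_t, a_t)$: on the event that concentration holds there, $m_{t-1}(z_t, a_t) \le f(z_t, a_t) + L\, D(p_{t-1}(z_t, a_t)) + C \sqrt{4\log t / n_{t-1}(z_t, a_t)}$. Substituting this into the expansion $f(z_t,a_t^\ast) \le m_{t-1}(z_t,a_t) + C\sqrt{4\log t/n_{t-1}(z_t,a_t)} + M\, D(p_{t-1}(z_t,a_t))$ and collecting terms gives
\[
f(z_t, a_t^\ast) - f(z_t, a_t) \le (L+M)\, D(p_{t-1}(z_t, a_t)) + 2 C \sqrt{\tfrac{4\log t}{n_{t-1}(z_t, a_t)}},
\]
which is exactly the stated bound under the analytical choice $M = L$ (for general $M \ge L$ the diameter coefficient is $L+M$). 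A clean feature worth emphasizing is that both surviving terms reference only the played point $(z_t, a_t)$, so no quantity attached to the unknown $a_t^\ast$ remains.

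The delicate part, and where I expect the main obstacle, is the probability bookkeeping, since the argument consumes Claim \ref{claim:concentration-contextual} at \emph{two} points and both are data-dependent: $a_t^\ast$ depends on the revealed context $z_t$, while $a_t$ depends on the whole history through the fitted tree. Thus I cannot merely quote a single pointwise failure event. Because Claim \ref{claim:concentration-contextual} is proved region-wise, via a skipped-martingale Azuma--Hoeffding bound that is constant on each bin of $\mathcal{P}_{t-1}$, the only relevant randomness is which bin each point lands in; I would therefore take a union bound over the two bins $p_{t-1}(z_t, a_t^\ast)$ and $p_{t-1}(z_t, a_t)$ and tune the confidence radius so that each single-point failure probability is at most $\tfrac{1}{2t^4}$. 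This is precisely the role of the additional $\log 2$ that accompanies $4\log t$ in the confidence width: inflating the radius by that factor drives each point's failure probability to $\tfrac{1}{2t^4}$, so the union over the two points keeps the total failure probability at $\tfrac{1}{t^4}$, matching the statement. I would close by noting that the $t=1$ case is immediate from Lipschitzness alone, exactly as in the proof of Claim \ref{claim:concentration-contextual}, so the bound holds for every $t$.
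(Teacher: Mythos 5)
Your argument is essentially the paper's: optimism at $(z_t,a_t^\ast)$ via Claim \ref{claim:concentration-contextual}, the greedy inequality $U_t((z_t,a_t^\ast)) \le U_t((z_t,a_t))$, and a lower confidence bound at the played point, combined with a union bound over the two data-dependent points. The one discrepancy is in the probability bookkeeping: there is no $\log 2$ term in the confidence width of Claim \ref{claim:concentration-contextual} or of the stated bound, so you cannot attribute the extra factor of two to it; the paper instead notes that each of the two required events is a \emph{one-sided} tail, each failing with probability at most $\frac{1}{2t^4}$, so the union over the two points lands exactly at $\frac{1}{t^4}$ (alternatively, an inflation of the radius could simply be absorbed into the unspecified constant $C$ for $t \ge 2$). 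Your remark that the diameter coefficient is really $L+M$ for general $M \ge L$, collapsing to $2L$ only when $M=L$, is correct and slightly more careful than the paper, which implicitly takes $M=L$ in this step.
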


\begin{proof}
    By Claim \ref{claim:concentration-contextual}, with probability at least $1 - \frac{1}{t^4}$, the following ((\ref{eq:eq-dummy1}) and (\ref{eq:eq-dummy2})) hold simultaneously, 
    \begin{align}
        &\quad m_{t-1} (z_t, a_t) + C \sqrt{ \frac{ 4 \log t }{ n_{t-1} (z_t, a_t) } } + L\cdot D(p_{t-1} (z_t, a_t)) \nonumber  \\
        &\ge m_{t-1} (z_t, a_t^*) + \sqrt{ \frac{ 4 \log t }{ n_{t-1} (z_t, a_t^*) } } + L\cdot D(p_{t-1} (z_t, a_t^*)) \nonumber  \\
        &\ge f(z_t, a_t^*),
        \label{eq:eq-dummy1}
    \end{align}
    \begin{align}
        f(z_t, a_t) &\ge m_{t-1} (z_t, a_t) - C \sqrt{ \frac{ 4 \log t }{ n_{t-1} (z_t, a_t) } }  - L\cdot D(p_{t-1} (z_t, a_t)) . \label{eq:eq-dummy2}
    \end{align}
    This is true since we first take a one-sided version of Hoeffding-type tail bound in (\ref{eq:claim3}), and then take a union bound over the two points $ (z_t,a_t) $ and $ (z_t,a_t^*) $. This first halves the probability bound and then doubles it. Then we take the complementary event to get (\ref{eq:eq-dummy1}) and (\ref{eq:eq-dummy2}) simultaneously hold with probability at least $1 - \frac{1}{t^4}$. We then take another union bound over time $t$, as discussed in the main text. Note that throughout the proof, we do not need to take union bounds over all arms or all regions in the partition. 
    
    Equation \ref{eq:eq-dummy1} holds by algorithm definition. Otherwise we will not select $a_t$ at time $t$. 
    Combine (\ref{eq:eq-dummy1}) and (\ref{eq:eq-dummy2}), and we get 
    \begin{align*}
        &\quad f (z_t, a_t^*) - f(z_t, a_t) \\
        &= f (z_t, a_t^*) - m_{t-1} (z_t, a_t)  + m_{t-1} (z_t, a_t) - f(z_t, a_t) \\
        &\le 2 C \sqrt{ \frac{ 4 \log t}{n_{t-1} (z_t, a_t)} }  + 2 L\cdot D(p_{t-1} (z_t, a_t)). 
    \end{align*} 
\end{proof}


\begin{table}
\centering
\subfloat[][CNN architecture for SVHN. A value with * means that this parameter is tuned, and the batch-normalization layer uses all Tensorflow's default settings.  ]{
\begin{tabular}{ c c c } 
Layer & Hyperparameters & values \\ \hline \hline
\multirow{4}{*}{  Conv1  }  & conv1-kernel-size & *  \\
&  conv1-number-of-channels & 200 \\
&  conv1-stride-size & (1,1) \\
\hline
\multirow{3}{*}{  MaxPooling1  }  & pooling1-size & (3,3)  \\
&  pooling1-stride & (1,1) \\
\hline
\multirow{4}{*}{  Conv2  }  & conv2-kernel-size & *  \\
&  conv2-number-of-channels & 200 \\
&  conv2-stride-size & (1,1) \\
\hline
\multirow{3}{*}{  MaxPooling2  }  & pooling2-size & (3,3)  \\
&  pooling2-stride & (2,2) \\
\hline
\multirow{4}{*}{  Conv3  }  & conv3-kernel-size & (3,3)  \\
&  conv3-number-of-channels & 200 \\
&  conv3-stride-size & (1,1) \\
\hline
\multirow{3}{*}{  AvgPooling3  }  & pooling3-size & (3,3)  \\
&  pooling3-stride & (1,1) \\
\hline
 \multirow{3}{*}{  Dense  }  & batch-normalization & default \\ 
 & number-of-hidden-units & 512 \\ 
 & dropout-rate & 0.5 \\ \hline
\end{tabular}
} \\
\subfloat[][Hyperparameter search space. $\beta_1$ and $\beta_2$ are parameters for the AdamOptimizer \citep{kingma2014adam}. The learning rate is discretized in the following way: from 1e-6 to is 1 (including the end points), we log-space the learning rate into 50 points, and from 1.08 to 5 (including the end points) we linear-space the learning rate into 49 points. \label{tab:svhn-params}]{
\begin{tabular}{ c c c }
Hyperparameters &  & Range  \\ \hline \hline 
conv1-kernel-size  & \qquad \qquad \qquad & $\{1,2,\cdots,7\}$ \\ \hline 
conv2-kernel-size & & $\{1,2,\cdots,7\}$ \\ \hline 
$\beta_1$ \& $\beta_2$ &  & $\{0 , 0.05 ,  \cdots, 1 \}$  \\ \hline 
learning-rate & & 1e-6 to 5 \\ \hline 
\makecell{training-iteration } &  & $\{300,400, \cdots,1500\}$ \\ \hline
\end{tabular}
}
\caption{Settings for the SVHN experiments. \label{tab:svhn-arch}}
\end{table}

\begin{table}
\centering
\subfloat[][CNN architecture for CIFAR-10. A value with * means that this parameter is tuned, and the batch-normalization layer uses all Tensorflow's default setting.]{
\begin{tabular}{ c c c } 
Layer & Hyperparameters & values \\ \hline \hline
\multirow{4}{*}{    Conv1  }  &   conv1-kernel-size & *  \\ 
&  conv1-no.-of-channels & 200 \\ 
&  conv1-stride-size & (1,1) \\ 
\hline
\multirow{3}{*}{  MaxPooling1  }  & pooling1-size & *  \\
&  pooling1-stride & (1,1) \\
\hline
\multirow{4}{*}{  Conv2  }  & conv2-kernel-size & *  \\
&  conv2-no.-of-channels & 200 \\
&  conv2-stride-size & (1,1) \\
\hline
\multirow{3}{*}{  MaxPooling2  }  & pooling2-size & *  \\
&  pooling2-stride & (2,2) \\
\hline
\multirow{4}{*}{  Conv3  }  & conv3-kernel-size & *  \\
&  conv3-no.-of-channels & 200 \\
&  conv3-stride-size & (1,1) \\
\hline
\multirow{3}{*}{  AvgPooling3  }  & pooling3-size & *  \\
&  pooling3-stride & (1,1) \\
&  pooling3-padding & ``same'' \\ \hline
 \multirow{3}{*}{  Dense  }  & batch-normalization & default \\ 
 & no.-of-hidden-units & 512 \\ 
 & dropout-rate & 0.5 \\ \hline
\end{tabular}
} \\
\subfloat[][Hyperparameter search space. $\beta_1$ and $\beta_2$ are parameters for the Adamoptimizer. The learning rate is discretized in the following way: from 1e-6 to is 1 (including the end points), we log-space the learning rate into 50 points, and from 1.08 to 5 (including the end points) we linear-space the learning rate into 49 points. The learning-rate-reduction parameter is how many times the learning rate is going to be reduced by a factor of 10.
For example, if the total training iteration is 200, the learning-rate is 1e-6, and the  
learning-rate-reduction is 1, then for the first 100 iteration the learning rate is 1e-6, and the for last 100 iterations the learning rate is 1e-7. \label{tab:cifar-params}]{
\begin{tabular}{ c c c }
Hyperparameters & \quad  & Range  \\ \hline \hline 
conv1-kernel-size &  & $\{1,2,\cdots,7\}$ \\ \hline  
conv2-kernel-size &  & $\{1,2,\cdots,7\}$ \\ \hline  
conv3-kernel-size &   & $\{1,2,3 \}$ \\ \hline 
pooling1-size \& pooling2-size &   & $\{1,2,3\}$ \\ \hline 
pooling3-size &   & $\{1,2,\cdots, 6 \}$ \\ \hline 
$\beta_1$ \& $\beta_2$ &   & $\{0 , 0.05 ,  \cdots, 1 \}$  \\ \hline 
learning-rate &   & 1e-6 to 5 \\ \hline
learning-rate-redeuction &   & \{1,2,3\} \\ \hline
\makecell{training-iteration } &   & $\{200,400, \cdots,3000\}$ \\ \hline
\end{tabular}
}
\caption{Settings for CIFAR-10 experiments. 
\label{tab:cifar-arch}}
\end{table} 
\subsection{Use Cases of Point Scattering Inequalities} \label{sec:use-point-scattering}

\subsubsection{Recover Previous Bounds}
In this section, we give examples of using the point scattering inequalities to derive regret bounds for other algorithms. For our purpose of illustrating the point scattering inequalities, the discussed algorithms are simplified. We also assume that the reward and the sub-Gaussianity are properly scaled so that the parameter before the Hoeffding-type concentration term is $1$. 

\textbf{The UCB1 algorithm} 
The classic UCB1 algorithm \citep{auer2002finite} assumes a finite set of arms, each having a different reward distribution. Following our notation, at time $t$, the UCB1 algorithm plays 
\begin{align} 
    a_t \in \arg \max_a 
    \left\{ m_{t-1} (a) + \sqrt{ \frac{2 \log T}{n_{t-1} (a)} } \right\} . \label{eq:ucb1}
\end{align} 
Indeed, this equation can be interpreted as (\ref{eq:ucb}) under the discrete 0-1 metric: two points are distance zero if they coincide and distance 1 otherwise. 
Then from the point scattering inequality (\ref{eq:point-scattering-gp}), we get for UCB1 
\begin{align*}
    \mathbb{E} [R_T (UCB1)] &
    = \mathcal{O} \left( \sum_{t=1}^T  \sqrt{ \frac{ \log T }{n_{t-1} (a_t)} } \right) 
    \\
    &= \mathcal{O} \left( \sqrt{T \log T}  \sqrt{\sum_{t=1}^T \frac{ 1 }{n_{t-1} (a_t)} } \right) = \widetilde{\mathcal{O}} \left(  \sqrt{ K\cdot T }  \right), 
\end{align*}
where $K$ is number of arms in the problem. 
This matches the gap-independent (independent of the reward gap between an arm and the optimal arm) bound derived using traditional methods in UCB1 algorithm \citep{auer2002finite,bubeck2012regret}.
In this analysis, we apply the point scattering inequality with the partition $\mathcal{P}_t$ being the set of arms at all $t$. 


\textbf{Finite Time Bound for Lipschitz Bandits and Lipschitz RL. } As shown in Claim \ref{claim:single-step-regret}, the single step regret is bounded by a Hoeffding-type concentration and the diameter of selected region (due to Lipschitzness). Since the point scattering inequalities provide a bound of the overall summation of the Hoeffding terms, we can design and analyze many partition-based Lipschitz algorithms using point scattering inequalities We can do this since the partitioning is up to our choice. Examples include the \texttt{UniformMesh} algorithm discussed by \citep{kleinberg2008multi}, and parition-based Lipschitz reinforcement learning algorithm recently studied \citep[e.g.][]{yang2019learning}.

\subsubsection{Hierarchical Bayesian Method for Lipschitz Bandits} \label{sec:hierarchical-bayesian}

Existing Lipschitz bandit algorithms \citep[e.g.,][]{kleinberg2008multi} partition the arm space into disjoint bins. Based on this partition, arms in two different bin do not give information about each other, and all arms within the same bins are viewed as the same. This implicit assumption, however, is obviously untrue. On the other hand, imposing a strong prior on the reward function would break the Lipschitzness assumption. To simultaneously address the above two difficulties, we link the learned tree (or partition) to a Bayesian model in light of our analysis of (\ref{eq:point-scattering-gp}). 
This new viewpoint allows us to ``soften'' the entire model using a hierarchical Bayesian method. 


Formally, at each time $t$, we consider the following hierarchical Bayesian problem with respect to the learned partition $\mathcal{P}_t$. 
Note that this hierarchical Bayesian model is updated whenever we update the partition. This is roughly the same as make a finite partition and treat each bin as an arm, and do not impose extra structures on the reward function. Let $\mathcal{P}_t$ be the learnt partition such that each bin is a rectangle. Then the kernel function is defined as 

\begin{align}
    \widetilde{k}_T (\cdot, \cdot) = \sum_{p \in \mathcal{P}_T} \widetilde{k}_T^{(p)} (\cdot, \cdot), 
\end{align}
where $ p $ are regions in $\mathcal{P}_T$, and  $\widetilde{k}_T^{(p)} (\cdot, \cdot)$ is defined as follows. For a partition $ p = \prod_{i=1}^d [a_i, b_i] $, define
\begin{align}
    &\widetilde{k}_T^{(p)} (\cdot, \cdot) = \prod_{i=1}^d \widetilde{k}_T^{(p, i)} (\cdot, \cdot) , \quad \text{where} \label{eq:soft-k-1} \\
    &\widetilde{k}_T^{(p, i)} ( \bm{x} , \bm{x}') = 
    \left[ 1 + \exp \left( - \alpha_T \left( \Delta_i - \frac{b_i - a_i}{2} \right) \right) \right]^{-1}, \label{eq:soft-k-2} \\
    &\Delta_i = \max \left\{ \left| \bm{x}_i - \frac{a_i + b_i}{2} \right|, \left| \bm{x}_i' - \frac{a_i + b_i}{2} \right| \right\} ,  \label{eq:soft-k-3}
\end{align}
where $\bm{x}_i$ (resp. $\bm{x}_i'$) are the $i$-th entry of $\bm{x}_i$ (resp. $\bm{x}'$), and $\alpha_T > 0$ are parameters that controls how smooth are the smoothed tree metrics. 
Given a learned partition $\mathcal{P}_T = \{ p_1, p_2, \cdots, p_K \}$, where $p_{j} = \prod_{i=1}^d [ a_{j}^{(i)}, b_{j}^{(i)}]$, we construct the following hierarchical Bayesian model
\begin{align} 
    \widetilde{a}_{j}^{(i)} &\sim \mathcal{ N } ({a}_{j}^{(i)}, \sigma^2 ), \text{ for all } i,j;  \quad \widetilde{b}_{j}^{(i)} \sim \mathcal{ N } ({b}_{j}^{(i)}, \sigma^2 ), \text{ for all } i,j \\
    \widetilde{k}_T &= \sum_{j = 1}^K  \widetilde{k}_T^{(p_j)} , \text{ where $\widetilde{k}_T^{(p_j)}$ is defined respect to $\prod_{i=1}^d [\widetilde{a}_{j}^{(i)}, \widetilde{b}_{j}^{(i)}]$ } \nonumber \\ 
    f &\sim \mathcal{GP} \left( 0, \widetilde{k}_T (\cdot, \cdot) \right) \\ 
    y &= f + \epsilon, \quad \text{where } \epsilon \sim \mathcal{N} ( 0, \sigma_y^2) .
\end{align}

\begin{figure}
    \centering
    \includegraphics[scale = 0.37]{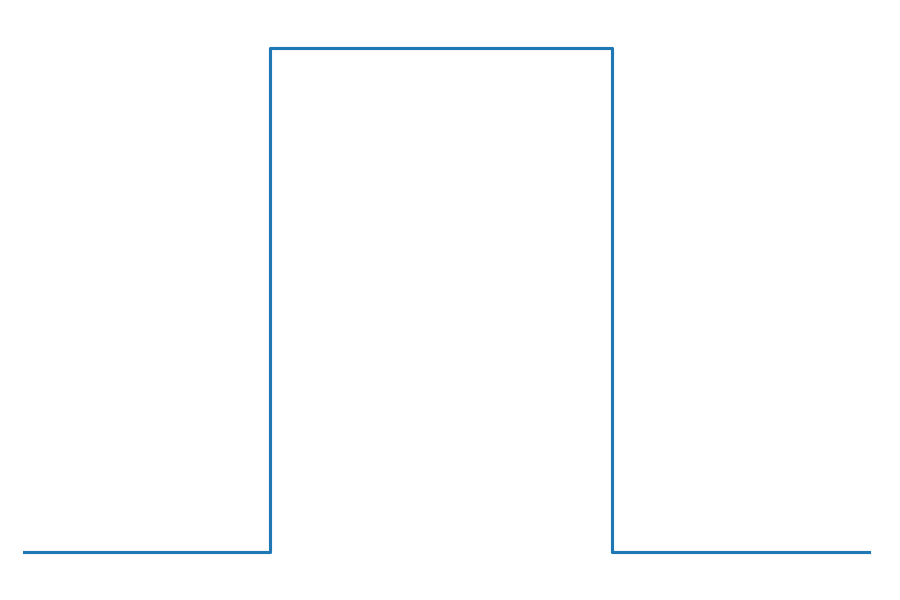} \includegraphics[scale = 0.37]{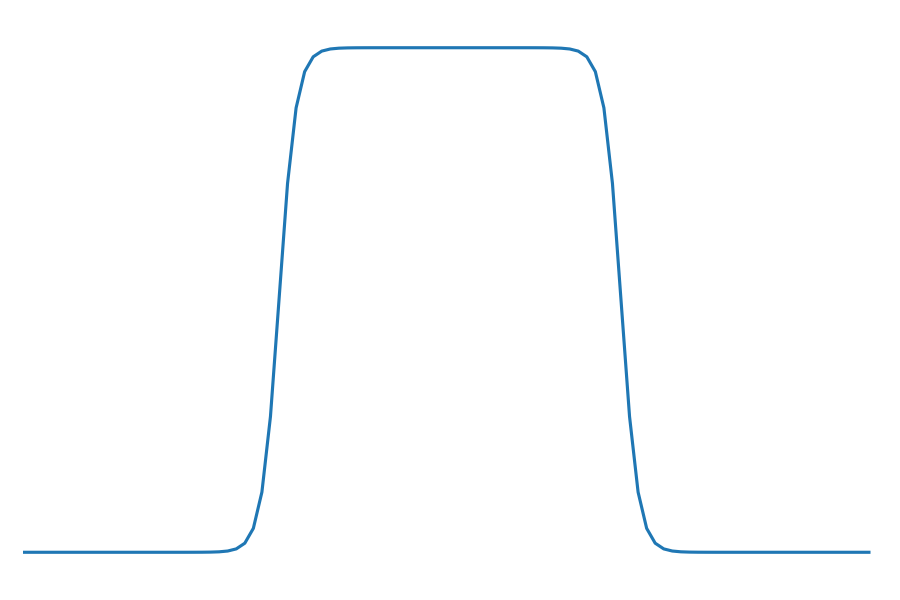}
    \caption{The left subfigure is the metric learned by Algorithm \ref{alg:tucb} (\ref{eq:tree-kernel}). The right subfigure is the smoothed version of this learned metric. \label{fig:step}}
\end{figure}

This hierarchical model has several advantages: (1) It respect Lipschitzness. As we collect more observations, the partition can grow arbitrarily fine, and the approximation can be arbitrarily close to an extract indicator function. Because of this, the no prior smoothness assumption on the true (unknown) reward function is needed. (2) It treats arms within the same bin differently, and can use information across bins. 

Going back to bandit learning process, we can replace the mean and/or confidence intervals of UCB index with the posteriors of this hierarchical bayesian model. As we discussed in Remark \ref{remark2}, a key insight of our analysis is the link between the Hoeffding-type concentration interval to the posterior variance of the Gaussian processes, which allows us to do this principled substitute. 
In Section \ref{subsec:exp-gp}, we empirically study this hierarchical Bayesian model.

%
%
%





\section{Empirical Study}
\label{sec:exp} 
Since the TreeUCB algorithm imposes only mild constraints on tree formation, we use greedy decision tree splitting to fit the reward function,
using the following splitting rule: we find the split that maximizes the reduction in the Mean Absolute Error (MAE), and we stop growing the tree once the maximal possible reduction is below 0.001. 

\subsection{Gaussian Processes with Learned Kernel} 
\label{subsec:exp-gp}

In this section, we compare several baselines, including piecewise constant estimates (within each bin), a Gaussian process regression with box kernel (left subfigure in Figure \ref{fig:step}) and Gaussian process regression with softened box kernel (right subfigure in Figure \ref{fig:step}). The splitting procedure is the same for all methods, so the partitions are the same for the methods. Our results, shown in Figure \ref{fig:smooth-GP}, demonstrates a transition from the hardness of the piecewise constant estimate to the softness of the Gaussian process regression with the softened kernel. This justifies the ``softening'' discussed in Section \ref{sec:hierarchical-bayesian}. 
The Gaussian process kernel parameters for $GP_{S,1}, GP_{S,2}, GP_{S,3}, GP_{S,4}, GP_{S,5}$, namely $\alpha_T$ in Eq. (\ref{eq:soft-k-2}), were set to $ 10,50,100,500,1000 $ respectively.

\begin{figure}[t]
    \centering
	\includegraphics[scale = 0.54]{./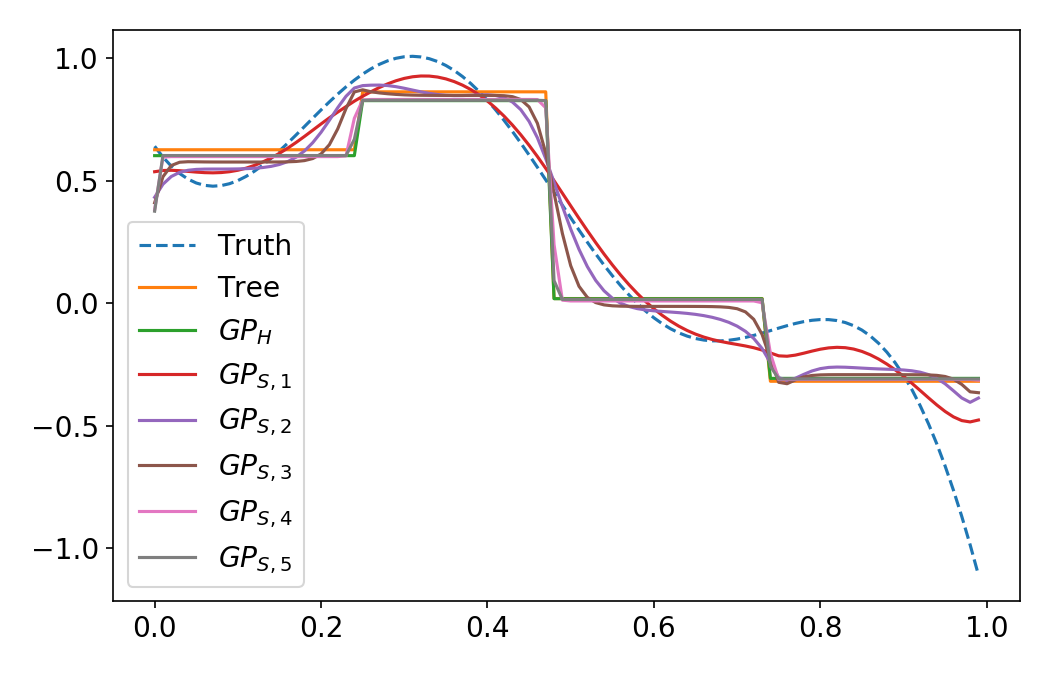}
	\caption{The estimates for a function with respect to a given partition. The ``Tree'' line is directly averaging within each partition. The ``$GP_H$'' line is the learned posterior GP mean function using the ``hard metric.'' The lines ``$GP_{S,1}$ - $GP_{S,5}$'' are 5 learned posterior GP mean functions using the ``soft metric'' (Eq. (\ref{eq:soft-k-1}) - (\ref{eq:soft-k-3})). \label{fig:smooth-GP}} 
	
\end{figure}

\subsection{Application to Neural Network Tuning}
\label{sec:tune-nn} 
One application of stochastic bandit algorithms is zeroth order optimization. In this section, we apply TUCB to tuning neural networks. In this setting, we treat the hyperparameter configurations (e.g., learning rate, network architecture) as the arms of the bandit, and use validation accuracy as reward. The task is to select a hyperparameter configuration and train the network to observe the validation accuracies, and find the best hyperparameter configuration rapidly. This experiment shows that TUCB can compete with the state-of-the-art tuning methods on such hard real-world tasks. 

The architecture and the hyperparameter space for the simple Multi-Layer Perceptron (MLP) for the MNIST dataset are: in the feed-forward direction, there are the \textit{input layer}, the \textit{fully connected hidden layer with dropout ensemble}, and then the \textit{output layer}. The hyperparameter search space is five dimensional, including \textit{number of hidden neurons} (range $[10, 784 ]$), \textit{learning rate}  ($[0.0001, 4 )$), \textit{dropout rate} ($[0.1, 0.9)$), \textit{batch size} ($[10, 500]$), and \textit{number of iterations} ($[30, 243]$). 

The details of the CNN setting for SVHN and CIFAR-10 can be found in Tables \ref{tab:svhn-arch} and \ref{tab:cifar-arch}. The results are found in Figure \ref{fig:tune-nn}, indicating that  
TUCB outperforms existing state-of-the-art software packages for tuning neural network methods. 

\begin{figure*}
\centering
\includegraphics[width = \textwidth]{./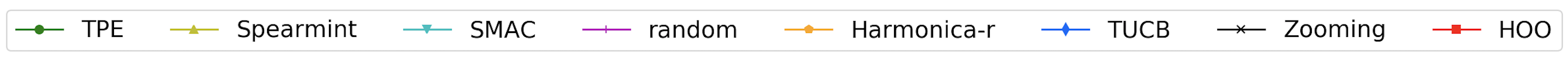} \\
\subfloat[MLP for MNIST]{\includegraphics[scale = 0.36]{./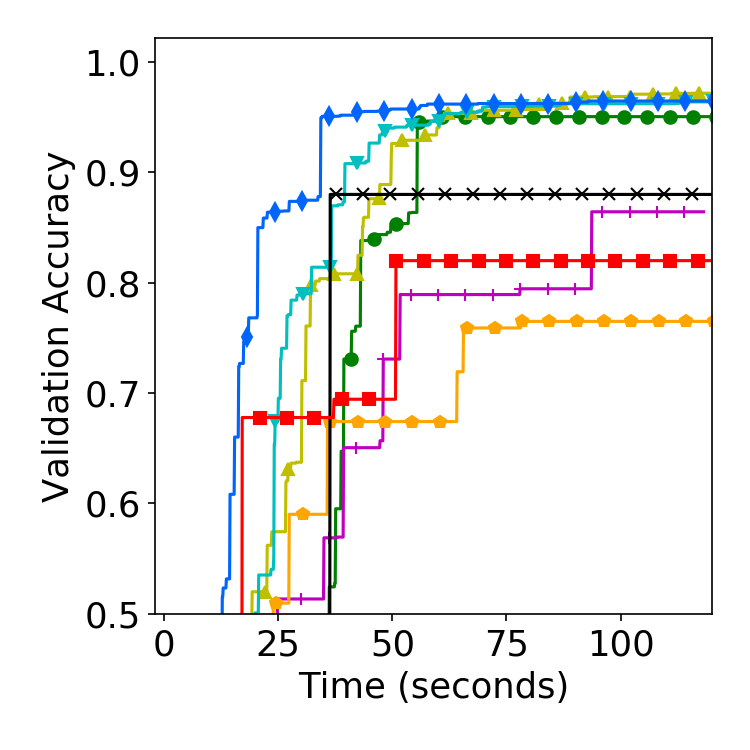} \label{fig:tune-nn-mnist}}
\subfloat[CNN for SVHN]{ \includegraphics[scale = 0.36]{./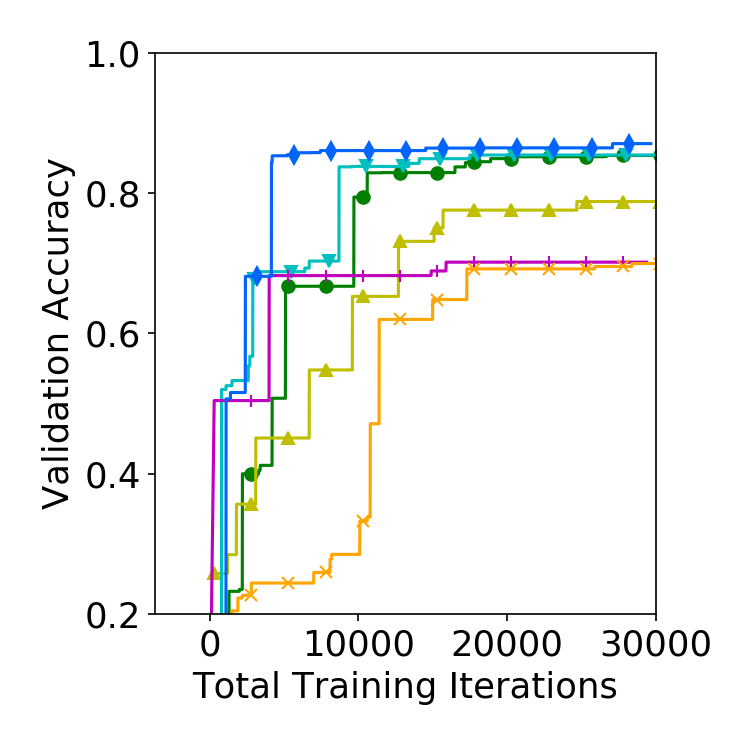} \label{fig:tune-nn-svhn}} 
\subfloat[CNN for CIFAR-10]{ \includegraphics[scale = 0.36]{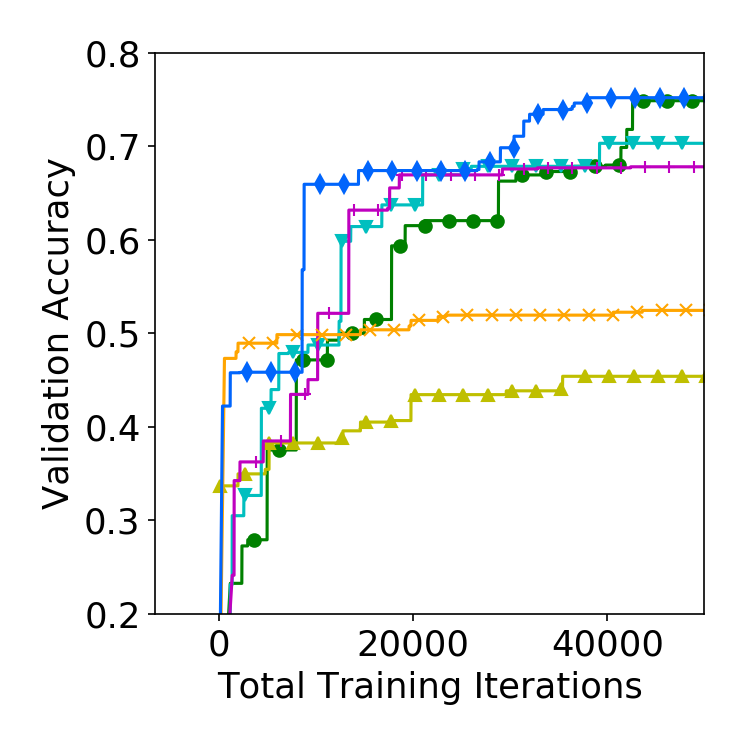}  \label{fig:tune-nn-cifar}}
\caption{
For MNIST, each plot is averaged over 10 runs. For SVHN and CIFAR-10, each plot is averaged over 5 runs. The implementation of TUCB here uses the scikit-learn package \citep{scikitlearn}. In the left-most subplot, x-axis is time (in seconds). This shows TUCB's scalability, since TUCB's curve goes up the fastest. In (a), we use clock time as cost measure.  \label{fig:tune-nn}}
\end{figure*}


\section{Conclusion}
We propose the TreeUCB and the Contextual TreeUCB frameworks that use decision trees (regression trees) to flexibly partition the arm space and the context-arm space as an Upper Confidence Bound strategy is played across the partition regions. We also provide regret analysis via the point scattering inequalities. We provide implementations using decision trees that learn the partition. 
TUCB is competitive with the state-of-the-art hyperparameter optimization methods in hard tasks like neural-net tuning, and could save substantial computing resources. 
This suggests that, in addition to random search and Bayesian optimization methods, more bandit algorithms should be considered as benchmarks for difficult real-world problems such as neural network tuning. 

\section*{Acknowledgement}

The authors are grateful to Aaron J Fisher and Tiancheng Liu for insightful discussions. The authors thank anonymous reviewers for valuable feedback. The project is partially supported by the Alfred P. Sloan Foundation through the Duke Energy Data Analytics fellowship. 


\bibliographystyle{ACM-Reference-Format}
\bibliography{biblio} 


\end{document}